\documentclass{TEAI}
\usepackage{mathtools}
\usepackage{wrapfig}
\usepackage{needspace}
\usepackage{enumitem}
\usepackage{tabularx}
\usepackage{makecell}
\usepackage{xurl}
\usepackage[export]{adjustbox}
\newtheorem{proposition}{Proposition}[section]
\newtheorem{definition}[proposition]{Definition}

\newcommand{\metricname}{BTE}

\newcommand{\metric}{\mathrm{\metricname}}
\DeclareMathOperator{\JSD}{JSD}
\newcommand{\KL}{D_{\mathrm{KL}}}
\newcommand{\Var}{\mathrm{Var}}

\newcommand{\simplex}{\Delta^{|V|-1}}
\newlength{\attributionvisualheight}
\newsavebox{\attributionimagebox}
\newsavebox{\attributiontablebox}
\newlength{\attributionrowextra}
\newcommand{\gain}[1]{\,{\scriptsize\textcolor{gray}{+#1}}}
\hypersetup{
  pdftitle={Belief-Trajectory Energy: Measuring the Path to a Prediction},
  pdfauthor={Jiahao Ying, Wei Tang, Boxian Ai, Yaoning Wang, Haotian Chen, Wenhe Sun, Caijun Xu, Haozhan Cai, Changyi Xiao, Yixin Cao}
}
\title{Belief-Trajectory Energy: Measuring the Path to a Prediction}
\author[1]{Jiahao Ying}
\author[2]{Wei Tang}
\author[1]{Boxian Ai}
\author[1]{Yaoning Wang}
\author[1]{Haotian Chen}
\author[1]{Wenhe Sun}
\author[1]{Caijun Xu}
\author[1]{Haozhan Cai}
\author[1]{Changyi Xiao}
\author[1,3, \dagger]{Yixin Cao}
\affiliation[1]{Fudan University}
\affiliation[2]{University of Science and Technology of China}
\affiliation[3]{Shanghai Innovation Institute}
\correspondence{\email{jhying@fudan.edu.cn, yxcao@fudan.edu.cn}}
\checkdata[Pages]{\url{https://yingjiahao14.github.io/BTE-web/}}
\abstract{Large language models (LLMs) progressively revise their predictions across Transformer layers, yet we typically observe only the final output, discarding the trajectory through which it is formed. We introduce \emph{Belief-Trajectory Energy} (BTE), a model-grounded measure that characterizes an input through the layerwise predictive revisions it induces in a model. By mapping intermediate states into a shared predictive space, BTE provides a principled measure of belief change that can be summarized as either a scalar or a structured depth profile. Theoretically, we show that local BTE corresponds to predictive revision under the Fisher–Rao geometry, while the sequence of revisions captures information beyond the initial-to-final belief change. Empirically, scalar BTE provides a model-relative signal of difficulty across diverse reasoning tasks, while richer BTE representations support human–LLM review detection and fine-grained generator attribution, reaching up to $0.998$ macro-AUROC and $95.6\%$ eight-way attribution accuracy. Further analysis shows that BTE develops throughout pretraining and is selectively reshaped by targeted training, demonstrating that the resulting measurement reflects what the scoring model has learned. Together, our results establish belief trajectories as a principled model-grounded signal and suggest a broader perspective in which learned models can themselves serve as instruments for characterizing the data they process.
}
\begin{document}
\maketitle
\begingroup
\renewcommand{\thefootnote}{\ensuremath{\dagger}}
\footnotetext[0]{Corresponding author.}
\endgroup
\section{Introduction}
\label{sec:intro}

Large language models (LLMs) internalize a model of the linguistic world through autoregressive training at scale, yet we interact with this model almost exclusively through its final outputs, e.g., a probability, a generation, or a judgment~\citep{Zhao_2026, cao2025toward}. Behind each output, however, lies a sequence of internal computations: as information propagates through Transformer layers, the model progressively transforms and refines its internal state before arriving at the final prediction. Conventional evaluation observes only this endpoint, leaving the trajectory of internal change largely unused.
We posit that this discarded trajectory can provide a useful signal beyond the final output. On the one hand, it records how a given text interacts with the linguistic and world knowledge encoded in the model’s parameters, potentially revealing distinctions that are obscured at the model's output alone. On the other hand, it provides a model-grounded view of the input: rather than relying solely on predefined human-centric properties, which may not align with the model’s own representation, we characterize an input by the predictive revisions it induces in the model.

Turning this rich internal information into a usable signal, however, is not trivial. \textbf{First}, the measurement should be principled rather than engineered: a single forward pass produces a large collection of high-dimensional intermediate states, admitting countless possible representations and reductions. Without a principled formulation, any resulting statistic risks being an engineered feature rather than a meaningful measure of the underlying trajectory of the model.
\textbf{Second}, the signal should be useful beyond a particular downstream predictor. High-dimensional internal representations can support powerful supervised classifiers, but their meaning is often tied to task- and domain-specific training. A broadly useful measurement should instead carry interpretable structure on its own while remaining readily applicable across diverse tasks and domains. \textbf{Third}, the information should be read at the right granularity. Neuron-level analyses are fine-grained but costly and often model-specific, whereas output-level statistics collapse the computation to its endpoint. The challenge is therefore to find an intermediate representation that preserves the evolution of internal computation while remaining compact, comparable, and easy to compute.

To address these challenges, we make the internal trajectory measurable through the model’s own predictive space. Specifically, we read each intermediate state through the model’s vocabulary projection as a next-token predictive distribution, which we refer to as the model’s \emph{belief}. This provides a common, prediction-grounded space in which changes across layers have a consistent interpretation. We then introduce \emph{Belief-Trajectory Energy} (BTE) to measure these successive belief revisions. Aggregating the layerwise revisions across tokens yields a compact depth profile of how the model’s belief evolves throughout computation.
Our theoretical analysis shows that BTE is not an arbitrary measure of internal change. Locally, Jensen–Shannon divergence induces the Fisher–Rao geometry of the predictive distribution, giving each transition a principled interpretation as an effective predictive revision. Across depth, BTE retains intermediate belief revisions that cannot be recovered from the initial-to-final change alone.
Empirically, we examine whether the model-grounded information captured by BTE can serve as a general measurement signal across different settings. \textbf{1)} As a training-free scalar, BTE consistently reflects response difficulty across diverse mathematical and reasoning tasks. Importantly, the same inputs can induce different BTE orderings under different scoring models, suggesting that BTE captures a model-relative aspect of difficulty that is complementary to human-defined difficulty labels. \textbf{2)} Retaining the full BTE depth profile provides richer structural information, supporting lightweight supervised readouts for human–LLM review detection and generator attribution. On paper-disjoint review detection across eight conference-years, Qwen-based BTE achieves a macro-averaged AUROC of $0.998$, while fine-grained BTE representations achieve $95.6\%$ accuracy for eight-way source attribution. \textbf{3)} Beyond these applications, our training analysis shows how such signals develop and change with the scoring model. BTE structure progressively emerges during pretraining and is selectively reshaped by targeted fine-tuning, demonstrating that the resulting measurement reflects what the scorer has learned and the distributions to which it has been exposed.
Together, these results highlight the central perspective behind BTE: a learned model can serve not only as an object of evaluation, but also as a measurement instrument, providing a complementary view of data through its internal predictive dynamics.

\section{The \metricname{} Statistic}
\label{sec:method}

We first map the model’s internal trajectory into predictive space, where states across depth share a common interpretation. Following the logit-lens formulation~\citep{nostalgebraistInterpretingGPTLogit2020a}, we read out each intermediate state through the model’s own output head as a predictive distribution.
Formally, let $\mathcal{M}$ be an open-weight Transformer (the \emph{scoring model}) with $L$ layers, vocabulary $V$, unembedding matrix $W_U\in\mathbb{R}^{|V|\times d}$, and final normalization $\mathrm{LN}$. Given a prompt $x$ and a response $y=(y_1,\dots,y_T)$, a single forward pass over $[x;y]$ yields the residual state $h_t^{(\ell)}$ at each response position $t$ and model depth $\ell$. We read out each intermediate state as:

\begin{equation}
  p_t^{(\ell)} \;=\;
  \mathrm{softmax}\ \bigl(W_U\,\mathrm{LN}(h_t^{(\ell)})\bigr)\in\simplex ,
  \label{eq:lens}
\end{equation}
where $p_t^{(\ell)}$ represents the predictive belief read out from layer $\ell$ at position $t$. The sequence
$p_t^{(0)} \to p_t^{(1)} \to \cdots \to p_t^{(L)}$ forms the model’s \emph{belief trajectory} for token $t$.  Given this trajectory, we quantify the magnitude of its layer-to-layer revisions to characterize how strongly the model’s belief evolves when processing a given input. We refer to this quantity as \emph{Belief Trajectory Energy} (BTE).

\begin{definition}[Belief Trajectory Energy]
\label{def:metric}
For each token $t$ and layer transition $\ell\rightarrow\ell+1$, we have
\begin{equation}
  \metric_{t,\ell}
  \;=\;
  \JSD \bigl(p_t^{(\ell)},\,p_t^{(\ell+1)}\bigr).
  \label{eq:def}
\end{equation}
\end{definition}

The quantity $\metric_{t,\ell}$ is the elementary BTE associated with one
 step of the belief. It measures the magnitude of the predictive-belief revision induced by layer $\ell$. Here, we establish
 further properties.

\begin{figure}[tbp]
    \centering
    \includegraphics[
        width=\textwidth,
        keepaspectratio
    ]{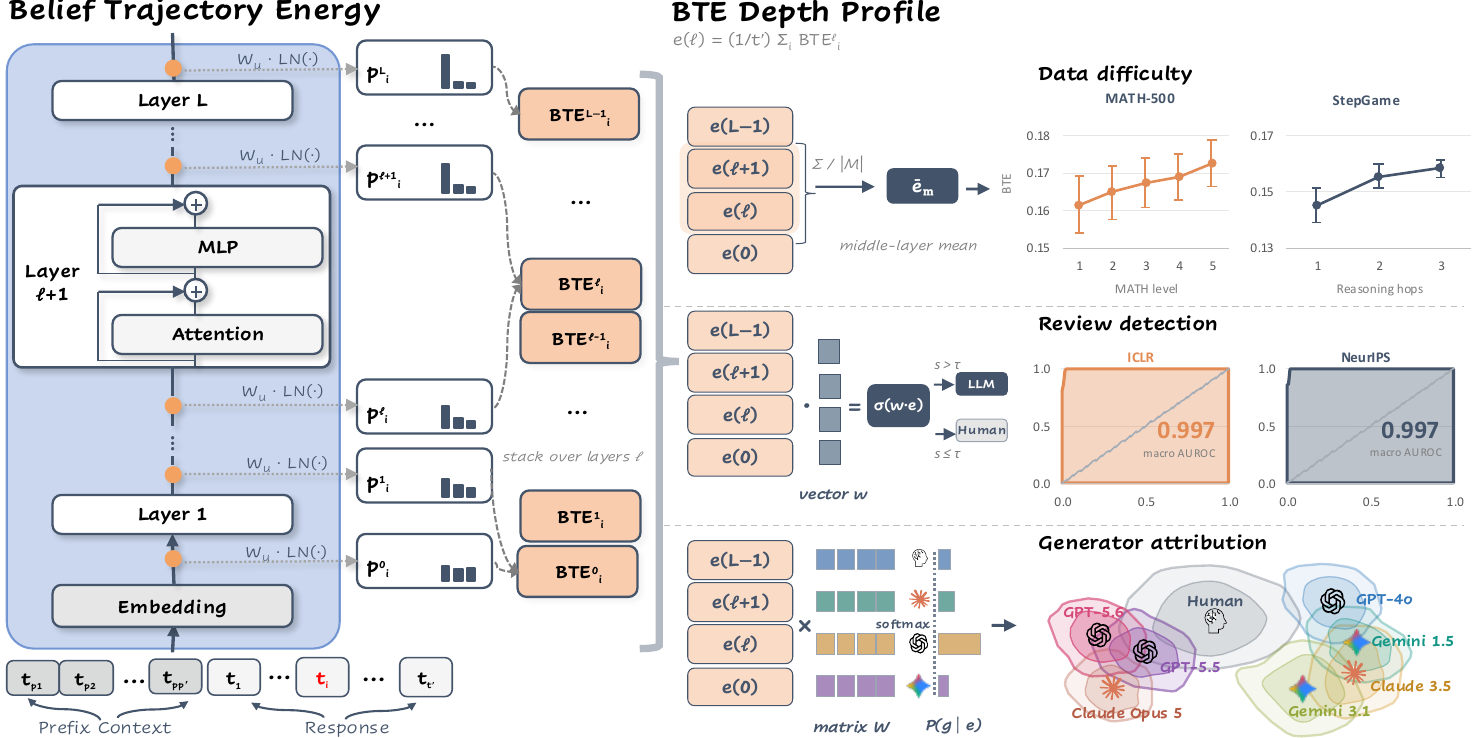}
\caption{\textbf{Overview of Belief-Trajectory Energy (BTE).}
BTE quantifies the layerwise predictive revisions induced by an input and aggregates them into a depth profile. The profile can be further summarized into a scalar for training-free data difficulty measurement, and used directly to distinguish human-LLM reviews and attribute responses to generators.}
     \label{fig:method}

\end{figure}

\subsection{Interpreting Belief Trajectory Energy}
\label{sec:score-physics}

Since BTE is built from successive belief revisions across model depth, we analyze it from two complementary perspectives. \emph{Locally}, we characterize what a single BTE step measures and motivate the use of Jensen–Shannon divergence. \emph{Across depth}, we characterize what the sequence of local revisions captures beyond the initial-to-final belief change alone.
Consider two consecutive predictive beliefs $p=\mathrm{softmax}(z)$ and $q=\mathrm{softmax}(z+\delta)$, where $\delta$ denotes the change in logits. Locally, Jensen–Shannon divergence admits that:

\begin{proposition}[Belief-weighted update variance]
\label{prop:variance}
Let $F=\mathrm{diag}(p)-pp^{\top}$. Then:
\begin{equation}
  \JSD(p,q)
  \;=\;
  \tfrac{1}{8}\,\delta^{\top}F\,\delta+O(\|\delta\|^{3})
  \;=\;
  \tfrac{1}{8}\,\Var_{i\sim p}\bigl[\delta_i\bigr]+O(\|\delta\|^{3}).
  \label{eq:variance}
\end{equation}
\end{proposition}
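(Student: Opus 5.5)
The plan is a second-order Taylor expansion of $\JSD$ around $q=p$, followed by a pushforward of the resulting Fisher metric from the simplex to logit space through the softmax Jacobian. Throughout we take $\JSD(p,q)=\tfrac12\KL(p\,\|\,m)+\tfrac12\KL(q\,\|\,m)$ with $m=\tfrac12(p+q)$, measured in nats; a different logarithm base only rescales the constant. The argument has three steps.

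\textbf{Step 1: expand $q-p$ in the logits.} Write $\Delta = q-p$. The softmax Jacobian at $z$ is exactly $F=\mathrm{diag}(p)-pp^{\top}$, so
\[
\Delta = F\delta + O(\|\delta\|^2), \qquad \textstyle\sum_i \Delta_i = 0 .
\]

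\textbf{Step 2: expand $\JSD$ in $\Delta$.} For $u$ with $\sum_i u_i=0$ and $p$ in the interior of the simplex, we have $\KL(p\,\|\,p+u)=\tfrac12\sum_i u_i^2/p_i+O(\|u\|^3)$. The same quadratic form holds for $\KL(p+u\,\|\,p)$; the base point changes only at third order. Apply this to both terms:

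\begin{itemize}
\item $\KL(p\,\|\,m)$ uses $u=\Delta/2$ and contributes $\tfrac12\cdot\tfrac14\sum_i\Delta_i^2/p_i$.
\item $\KL(q\,\|\,m)$ is the same quadratic form up to $O(\|\Delta\|^3)$, since $q$ and $m$ differ from $p$ by $O(\Delta)$.
\end{itemize}

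Adding them with weights $\tfrac12$ gives
\[
\JSD(p,q)=\tfrac18\sum_i \frac{\Delta_i^2}{p_i}+O(\|\Delta\|^3).
\]
This is the classical statement that $\JSD$ is locally $\tfrac18$ of the Fisher--Rao (chi-square) metric.

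\textbf{Step 3: pull back to logits.} Substitute $\Delta=F\delta+O(\|\delta\|^2)$, so that $\sum_i\Delta_i^2/p_i=\Delta^{\top}\mathrm{diag}(p)^{-1}\Delta$. The key identity is
\[
F\,\mathrm{diag}(p)^{-1}F=F .
\]
To check it, note $\mathrm{diag}(p)^{-1}F=I-\mathbf{1}p^{\top}$, and $F\mathbf{1}=0$, so $F(I-\mathbf{1}p^{\top})=F$. Hence
\[
\sum_i\frac{\Delta_i^2}{p_i}=\delta^{\top}F\delta+O(\|\delta\|^3).
\]
The cross term between $F\delta$ and the $O(\|\delta\|^2)$ remainder is only cubic. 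Also $\|\Delta\|=O(\|\delta\|)$, so every error term is $O(\|\delta\|^3)$.

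\textbf{Second equality.} This is direct algebra:
\[
\delta^{\top}F\delta=\sum_i p_i\delta_i^2-\Bigl(\sum_i p_i\delta_i\Bigr)^2=\Var_{i\sim p}[\delta_i].
\]
This also reflects the shift invariance of softmax, since adding a constant to $\delta$ changes neither side.

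\textbf{Main obstacle.} The only delicate point is controlling the remainder in Step 2 uniformly. The $O(\cdot)$ constants depend on $\min_i p_i$, because the third derivative of $\KL$ involves $1/p_i^2$. The plan is to state the expansion for fixed $z$, so $p$ is strictly positive, which is automatic under softmax, and to let the constants depend on $z$. To justify the cubic remainder for the $\KL(q\,\|\,m)$ term, we will apply Taylor's theorem with remainder to $f(t)=\JSD(p,\mathrm{softmax}(z+t\delta))$. This function is smooth, and $f(0)=f'(0)=0$ because $\JSD$ is minimized at $q=p$. Computing $f''(0)$ by Steps 1–3 then gives the result without separate bookkeeping of the $\KL$ remainders.
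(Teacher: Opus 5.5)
Your proposal is correct and follows the paper's proof essentially step for step. You expand $\JSD$ to $\tfrac18\sum_i \varepsilon_i^2/p_i$ in $\varepsilon=q-p$, substitute the softmax linearization $\varepsilon=F\delta+O(\|\delta\|^2)$, collapse the result with $F\,\mathrm{diag}(1/p)\,F=F$, and finish with the variance identity; your explicit verification of that identity, and your note that the remainder constants depend on $\min_i p_i$ (handled by Taylor's theorem along $t\mapsto \mathrm{softmax}(z+t\delta)$), are details the paper leaves implicit.
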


Proposition~\ref{prop:variance} shows that BTE measures the update relative to the model’s current predictive belief: a uniform shift of all logits contributes nothing, whereas differential updates among tokens carrying substantial probability mass contribute more. Thus, a local BTE step measures the \textbf{effective predictive revision} induced by a layer. This characterization also explains why \eqref{eq:variance} provides a principled measure of such revisions. The quadratic form
$F=\mathrm{diag}(p)-pp^\top$ is the Fisher–Rao metric in logit coordinates, the canonical statistically invariant geometry of probability distributions \citep{cencov2000statistical,lebanon2004extended}. Thus, JSD locally measures belief revision in the intrinsic geometry of the predictive distribution rather than by raw logit distance. Moreover, $\JSD(p,q)$ has a direct information-theoretic interpretation as the mutual information between a fair before/after indicator and a token sampled from the corresponding distribution, giving each BTE step a natural unit in nats.

The local analysis characterizes what each individual BTE step measures; we next ask what is gained by retaining these revisions across model depth.
Unlike output-level measures such as likelihood or perplexity, which depend only on the final predictive distribution, a belief trajectory records the layerwise revisions through which that change is formed rather than collapsing the trajectory into the endpoint displacement $\JSD (p_t^{(0)},p_t^{(L)})$. Formally, since $\sqrt{\JSD}$ is a metric, the triangle inequality together with Cauchy–Schwarz gives
\begin{equation}
  \sum_{\ell=0}^{L-1}\JSD\bigl(p_t^{(\ell)},p_t^{(\ell+1)}\bigr)
  \;\ge\;\tfrac{1}{L}\,\JSD\bigl(p_t^{(0)},\,p_t^{(L)}\bigr).
  \label{eq:energy}
\end{equation}

The endpoint displacement therefore provides only a lower bound on the cumulative layerwise revisions. Even with the same initial and final beliefs, different intermediate trajectories can yield different BTE sequences. BTE thus preserves how the model’s predictive belief evolves across depth, rather than only the final prediction or its net initial-to-final change (In Section~\ref{sec:experiments} we validate it experimentally).

\subsection{BTE Depth Profile}
\label{sec:bte-representation}

The preceding analysis motivates preserving the depth-wise structure of BTE rather than collapsing it into a single endpoint statistic. At the response level, however, token-level BTE values form a $T\times L$ matrix, where the token dimension $T$ varies with response length. To facilitate its subsequent use, we therefore aggregate over tokens while retaining the depth dimension, yielding a fixed-length representation that we term the \emph{BTE depth profile} $\mathbf{e}$ (illustration of the
 calculation of the depth profile is shown in Figure~\ref{fig:method} ):

\begin{equation}
  e(\ell) \;=\; \frac{1}{T}\sum_{t=1}^{T}\metric_{t,\ell},
    \qquad
\mathbf{e}
=
\bigl(e(0),e(1),\ldots,e(L-1)\bigr)
\in \mathbb{R}^{L}.
\label{eq:depth_profile}
\end{equation}

Each entry $e(\ell)$ captures the average predictive revision per token at a particular model depth, removing variation in response length while retaining the depth-wise structure of BTE.
Mean pooling provides a natural compression along the token dimension. Local BTE values are measured on a common and bounded scale, $0\leq\metric_{t,\ell}\leq\log 2$, making them directly aggregable across token positions. Taking their mean further gives $e(\ell)$ a simple interpretation as the expected predictive revision of a token at depth $\ell$, while normalizing out response length. Empirically, this aggregation is also supported by the distributed nature of BTE: the signal is not dominated by a small number of high-energy tokens, with the top $5\%$ accounting for only ${\sim}8\%$ of the observed difference between responses. Mean pooling therefore summarizes the broadly distributed token-level signal. Beyond mean pooling, Section~\ref{sec:attribution} further explores richer token-level statistics to retain information discarded by averaging.

\section{Experimental Analysis}
\label{sec:experiments}

Here, we empirically investigate whether BTE provides a useful measurement signal, proceeding from its scalar magnitude to the structure of its depth profile. First, motivated by the interpretation of BTE as effective predictive revision, we examine whether its magnitude reflects the difficulty an input poses to the scoring model in a fully training-free setting, using BTE directly as a scalar score (Section~\ref{sec:difficulty}). Second, we examine whether the depth profile captures systematic differences in how the model processes inputs of different origins, using human–LLM review detection as a testbed (Section~\ref{sec:review-detection}). Finally, we ask whether these patterns are sufficiently distinctive to identify the specific model that generated an input, evaluating BTE on generator attribution (Section~\ref{sec:attribution}). Unless otherwise specified, we use Llama-3.1-8B-Instruct~\citep{grattafiori2024llama3herdmodels} as the scoring model, whose homogeneous layer structure provides a clean setting for analyzing belief revisions.

\Needspace{22\baselineskip}
\subsection{BTE as a Difficulty Indicator}
\label{sec:difficulty}

Intuitively, if BTE captures the effective predictive revisions induced during computation, inputs that are more challenging to the model may elicit stronger revisions. We therefore here study how BTE varies with difficulty across model depth, without fitting any task-specific predictor.

\begin{wrapfigure}{r}{0.47\textwidth}
  \centering
  \captionsetup{skip=4pt}
  \includegraphics[width=\linewidth]{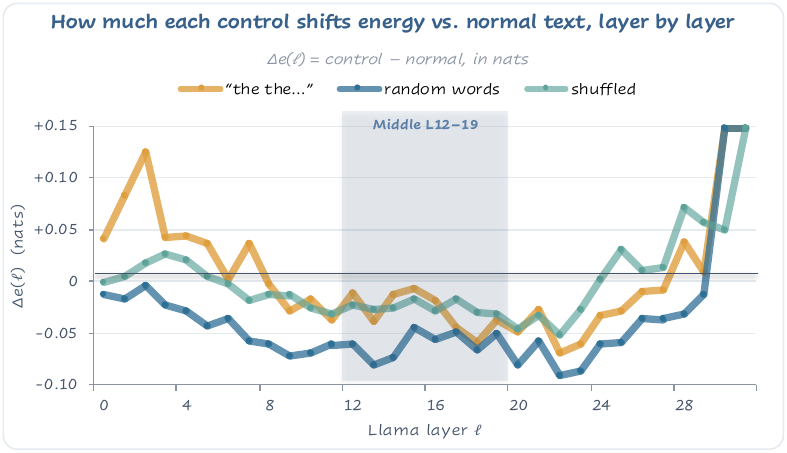}
  \caption{\textbf{Response-disruption control.}
  Layer-wise BTE change relative to normal responses.}
  \label{fig:case-study}
\end{wrapfigure}
Using BTE as a scalar requires reducing its depth profile to a fixed readout. Rather than averaging indiscriminately across all layers, we use the central quarter of transitions (layers 12–19 for our 32-layer Llama scorer) as a fixed, training-free summary. This choice is motivated by an exploratory response-disruption analysis: relative to normal responses, disrupted responses tend to exhibit lower BTE in this central window but substantially higher BTE in the final transitions (Figure~\ref{fig:case-study}). Because these perturbations leave the underlying problem unchanged while increasing terminal-layer BTE, high BTE near the output is not specific to problem difficulty, and averaging over the full depth may conflate qualitatively different effects. We therefore adopt the central window as a conservative readout of intermediate predictive revision and test whether it tracks model-relative difficulty. We further compare central-, late-, and full-depth readouts in Appendix~\ref{app:difficulty}.

\textbf{Experimental Setup.} Using this score, we evaluate difficulty sensitivity in two settings. \textbf{Within-task}, inputs come from a single benchmark with predefined difficulty levels, and we test whether BTE increases with difficulty. We consider MATH-500 (levels 1–5)~\citep{hendrycksmath2021}, ProofWriter (proof depths 1–5)~\citep{tafjord2021proofwriter}, StepGame (reasoning hops 1–5)~\citep{shi2022stepgame}, and SpaRTUN (hops 1–3)~\citep{mirzaee2022transfer}, using official reference solutions as responses. \textbf{Cross-task}, we compare benchmarks of increasing difficulty within the same domain: GSM8K $\rightarrow$ MATH-500 $\rightarrow$ OlympiadBench~\citep{he2024olympiadbench} for mathematical reasoning and StepGame $\rightarrow$ SpaRTUN for spatial reasoning. By default, we score the reference solution conditioned on the problem, as solving difficulty may be expressed in the reasoning required to produce a solution rather than in the problem statement alone.
We use the mean BTE over the central layer set $\mathcal{L}_M$ as the scalar score $\bar e_M$.
In both settings, we report the level-wise change in $\bar e_M$ relative to the easiest level, with 95\% cluster-bootstrap confidence intervals, together with the item-level Spearman correlation $\rho$ between $\bar e_M$ and difficulty.
Because BTE provides a model-grounded rather than input-intrinsic measure, we further examine the signal along two dimensions. \textbf{First}, we compare response BTE with problem-only BTE to determine whether difficulty is already reflected in processing the problem or becomes more apparent when processing its solution. \textbf{Second}, we vary the scoring model to examine how the difficulty signal depends on the model processing the input.

\begin{figure}[tbp]
    \centering
    \includegraphics[
        width=\textwidth,
        keepaspectratio
    ]{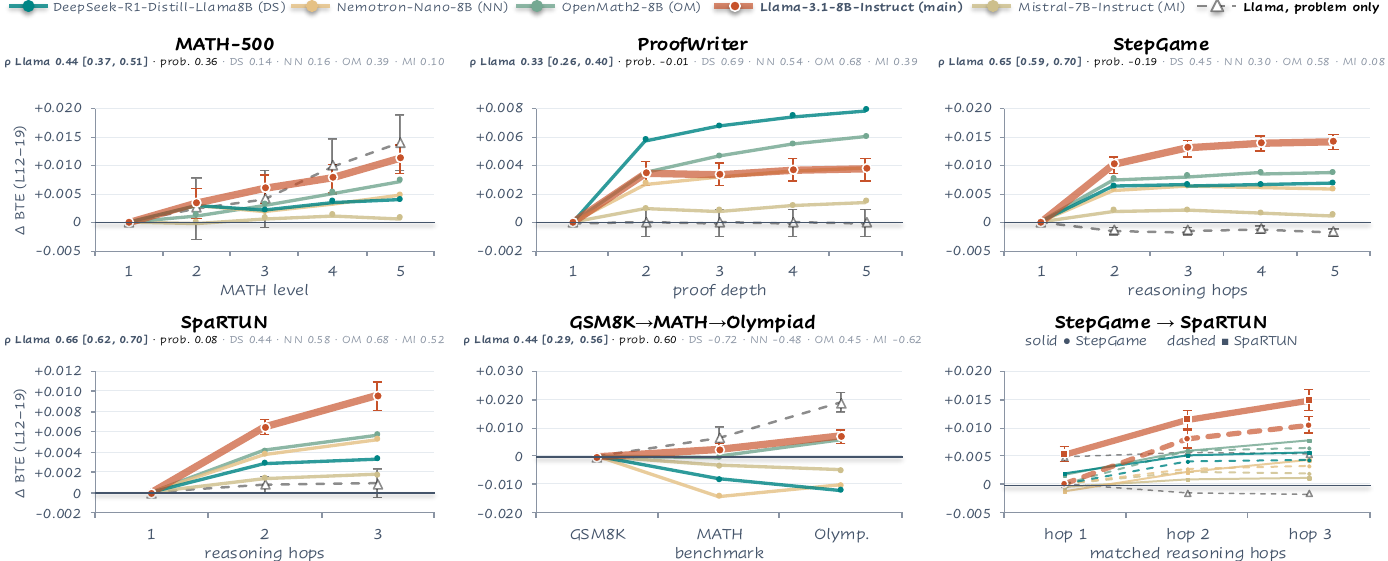}
\caption{Level-wise change in $\bar e_{M}$ relative to the easiest level, within task (top row, SpaRTUN) and across tasks (GSM8K$\to$MATH-500$\to$OlympiadBench, StepGame$\to$SpaRTUN). Solid: reference solution scored given its problem, five scorers ordered red$\to$teal by strength; dashed: problem statement only (Llama). Bars: 95\% bootstrap CI (Llama).}
     \label{fig:diff}
\end{figure}

Shown in Figure~\ref{fig:diff}, we find that
\textbf{1)} \textbf{Response BTE tracks difficulty within and across tasks.}
Using reference solutions as the scoring target and Llama as the scorer, $\bar e_M$ increases consistently with difficulty on all four within-task benchmarks, with item-level Spearman correlations ranging from $\rho=0.33$ to $0.66$ and all confidence intervals excluding zero. BTE also recovers the expected cross-task ordering for both mathematical and spatial reasoning. The cross-task result is less stable across scorers, however: three of the five scorers assign lower BTE to MATH-500 than to GSM8K.
We hypothesize that this reversal may partly reflect scorer-specific training exposure that shifts the overall BTE scale of a distribution. Intuitively, greater exposure to a distribution during training may make its inputs more familiar to the scorer, reducing the predictive revisions they induce even when the underlying problems remain difficult.
This provides a possible explanation for why models with enhanced competition-level mathematical reasoning capabilities, such as DeepSeek-R1-Distill-Llama-8B and Nemotron-Nano-8B, can invert the expected ordering between GSM8K and MATH-500. We test this hypothesis directly in Section~\ref{sec:ablation}. More broadly, this observation suggests a potential use of BTE as a probe of training exposure or model familiarity: if greater exposure systematically suppresses predictive revisions, unusually low BTE may indicate greater familiarity with a given data distribution. We leave a systematic investigation of this possibility to future work.

\textbf{2)}
\textbf{In contrast, a problem statement need not directly reveal the computation required to solve it.} This is particularly evident on ProofWriter and SpaRTUN, where difficulty is controlled by proof depth or reasoning hops while the problem statements remain largely comparable across difficulty levels. Accordingly, problem-only BTE shows little association with difficulty ($\rho=-0.01$ and $0.08$), whereas response BTE exhibits substantially stronger trends ($\rho=0.33$ and $0.66$, respectively). In contrast, on MATH-500, difficulty is already reflected in the problem statement: problem-only BTE achieves $\rho=0.36$, compared with $\rho=0.44$ when scoring the reference solution. These results highlight that BTE does not define a single notion of difficulty, but provides complementary model-grounded views depending on what is scored. Problem-only BTE captures difficulty expressed directly in the input but may miss computation that becomes apparent only during solving; response BTE exposes such computation but is inevitably affected by properties of the particular solution being scored. Here we explore more and detailed in Appendix~\ref{app:difficulty}.

\textbf{3)} Difficulty sensitivity depends on the scoring model. In general, both the strongest model DeepSeek-R1-Distill-Llama and the weakest model Mistral show weaker difficulty responses than intermediate-capability scorers, with mean correlations of $0.43$ and $0.27$, respectively, compared with $0.52$ for Llama.
Intuitively, a model with insufficient capability may fail to differentiate inputs of varying difficulty, whereas a model that is highly capable or particularly familiar with a domain may exhibit less differentiated predictive revisions across the same inputs, compressing their BTE differences. This interpretation is consistent with the cross-task results above, where scorer-specific training exposure can alter the relative BTE scale across benchmarks (see Section~\ref{sec:ablation} for more information).
This pattern also suggests that \textbf{BTE captures a model-relative aspect of difficulty that is complementary to human-defined difficulty labels.} The same input can induce different levels of predictive revision depending on the model processing it, making the scoring model part of the measurement itself. Practically, when using BTE to assess difficulty, the scorer should therefore be selected with respect to both its capability level and its potential training exposure to the evaluated data.

\subsection{BTE as a Review Detector}
\label{sec:review-detection}

The preceding experiments show that even a scalar summary of BTE can provide a useful training-free signal for difficulty. However, collapsing BTE across depth necessarily discards where predictive revisions occur. We next ask whether the full BTE depth profile contains additional structure that can support more fine-grained distinctions about inputs.
We use human-LLM review detection to test this question. Reviews typically combine a synthesis of the paper with evidence-based assessments of its strengths and weaknesses, providing substantial text in which depth-wise patterns can emerge.
Human and LLM reviews of the same paper share an underlying subject while having known provenance labels, reducing variation due to topic.

\textbf{Experimental Setup.}
We use real peer reviews collected and extended from ICLR and NeurIPS papers from 2021–2024 provided by~\citet{yu2025your} (detailed in Appendix~\ref{app:review-data}).
For each review, we evaluate two depth profile representations: \emph{shape-only 32D}, which normalizes each review's profile across depth, and \emph{full 32D}, which retains its absolute values and shape.
We evaluate three training settings: \textbf{A (legacy-to-new transfer)} trains on original human reviews and reviews from the three earlier generators on ICLR 2021 papers; the four newer generators are unseen during training. \textbf{B (cross-family transfer)} uses the same ICLR 2021 papers but holds out one generator family at a time: a classifier trained on the other two families is evaluated on the held-out family, rotating among GPT, Gemini, and Claude. \textbf{C (full coverage)} trains on papers from all eight conference-years and all seven generators, while keeping every test paper disjoint from training. We report both Original human reviews and a Rewrite condition in which only the human reviews are rewritten at test time; the generated reviews and fitted classifiers remain fixed.
We compare against existing detection baselines following work~\citep{yu2025your}, with implementation details provided in Appendix~\ref{app:review-data}.

\begin{table}[!htbp]
\centering
\setlength{\tabcolsep}{1.4pt}
\renewcommand{\arraystretch}{1.12}
\fontsize{8}{9.5}\selectfont
\begin{adjustbox}{max width=\linewidth}
\begin{tabular}{@{}l*{16}{c}@{}}
\toprule
\multirow{2}{*}{\textbf{Method}}
& \multicolumn{8}{c}{\textbf{Original human reviews}}
& \multicolumn{8}{c}{\textbf{GPT-5.5 rewrite of human reviews}} \\
\cmidrule(lr){2-9}\cmidrule(l){10-17}
& \makecell{GPT\\4o} & \makecell{Gem\\1.5}
& \makecell{Clau\\3.5} & \makecell{GPT\\5.5}
& \makecell{GPT\\5.6} & \makecell{Gem\\3.1}
& \makecell{Opus\\5} & \textbf{Avg.}
& \makecell{GPT\\4o} & \makecell{Gem\\1.5}
& \makecell{Clau\\3.5} & \makecell{GPT\\5.5}
& \makecell{GPT\\5.6} & \makecell{Gem\\3.1}
& \makecell{Opus\\5} & \textbf{Avg.} \\
\midrule

Log-likelihood (Llama-3.1-8B)
& 0.945 & 0.994 & 0.989 & 0.351 & 0.119 & 0.965 & 0.565 & 0.704
& 0.926 & 0.992 & 0.986 & 0.251 & 0.066 & 0.953 & 0.453 & 0.661 \\
Log-likelihood (Qwen-3.5-9B)
& 0.950 & 0.996 & 0.997 & 0.799 & 0.436 & 0.992 & 0.903 & 0.868
& 0.931 & 0.995 & 0.996 & 0.731 & 0.330 & 0.990 & 0.866 & 0.834 \\
SpecDetect (raw)~\citep{luo2026specdetect}
& 0.439 & 0.441 & 0.297 & 0.526 & 0.588 & 0.139 & 0.810 & 0.463
& 0.455 & 0.456 & 0.313 & 0.539 & 0.601 & 0.150 & 0.817 & 0.476 \\
SpecDetect (normalized)
& 0.013 & 0.003 & 0.008 & 0.708 & 0.934 & 0.098 & 0.639 & 0.343
& 0.019 & 0.003 & 0.012 & 0.805 & 0.970 & 0.142 & 0.758 & 0.387 \\
Binoculars~\citep{hans2024spotting}
& 0.908 & 0.995 & 0.993 & 0.593 & 0.160 & 0.957 & 0.711 & 0.760
& 0.874 & 0.991 & 0.989 & 0.521 & 0.112 & 0.942 & 0.644 & 0.725 \\
Open-Anchor~\citep{yu2025your}
& 0.949 & 0.987 & 0.972 & 0.891 & 0.779 & 0.924 & 0.803 & 0.901
& 0.946 & 0.985 & 0.970 & 0.884 & 0.766 & 0.920 & 0.792 & 0.895 \\

\midrule
\multicolumn{17}{@{}l}{\textit{Setting A: three legacy training generators; seven held-out conference-years}} \\
\quad Llama-3.1-8B (shape-only, 32D)
& 0.990 & 0.982 & 0.991 & 0.614 & 0.378 & 0.939 & 0.183 & 0.725
& 0.989 & 0.981 & 0.991 & 0.587 & 0.348 & 0.932 & 0.161 & 0.713 \\
\rowcolor{gray!10}
\quad Llama-3.1-8B (full, 32D)
& 0.992 & 0.985 & 0.982 & 0.678 & 0.543 & 0.904 & 0.109 & 0.742
& 0.991 & 0.982 & 0.978 & 0.631 & 0.490 & 0.883 & 0.084 & 0.720 \\
\quad Qwen-3.5-9B (shape-only, 32D)
& 0.995 & 0.976 & 0.984 & 0.606 & 0.315 & 0.847 & 0.154 & 0.697
& 0.995 & 0.974 & 0.983 & 0.586 & 0.297 & 0.835 & 0.141 & 0.687 \\
\rowcolor{gray!10}
\quad Qwen-3.5-9B (full, 32D)
& \textbf{0.996} & 0.990 & 0.989 & 0.858 & 0.725 & 0.961 & 0.717 & 0.891
& \textbf{0.996} & 0.989 & 0.988 & 0.820 & 0.657 & 0.952 & 0.646 & 0.864 \\

\midrule
\multicolumn{17}{@{}l}{\textit{Setting B: leave-one-family-out training; seven held-out conference-years}} \\
\quad Llama-3.1-8B (shape-only, 32D)
& 0.946 & 0.811 & 0.975 & 0.893 & 0.817 & 0.916 & 0.902 & 0.894
& 0.943 & 0.835 & 0.977 & 0.889 & 0.810 & 0.927 & 0.907 & 0.898 \\
\rowcolor{gray!10}
\quad Llama-3.1-8B (full, 32D)
& 0.935 & 0.924 & 0.984 & 0.897 & 0.857 & 0.944 & 0.824 & 0.909
& 0.932 & 0.924 & 0.982 & 0.891 & 0.850 & 0.944 & 0.810 & 0.905 \\
\quad Qwen-3.5-9B (shape-only, 32D)
& 0.990 & 0.995 & 0.997 & 0.998 & 0.996 & 0.997 & 0.993 & 0.995
& 0.987 & 0.995 & 0.997 & 0.998 & 0.995 & 0.996 & 0.992 & 0.994 \\
\rowcolor{gray!10}
\quad Qwen-3.5-9B (full, 32D)
& 0.995 & 0.996 & 0.998 & 0.991 & 0.985 & 0.997 & 0.996 & 0.994
& 0.995 & 0.995 & 0.998 & 0.990 & 0.983 & 0.997 & 0.995 & 0.993 \\

\midrule
\multicolumn{17}{@{}l}{\textit{Setting C: seven training generators; all eight conference-years}} \\
\quad Llama-3.1-8B (shape-only, 32D)
& 0.915 & 0.971 & 0.989 & 0.957 & 0.930 & 0.973 & 0.951 & 0.955
& 0.913 & 0.970 & 0.989 & 0.957 & 0.929 & 0.973 & 0.951 & 0.954 \\
\rowcolor{gray!10}
\quad Llama-3.1-8B (full, 32D)
& 0.943 & 0.979 & 0.992 & 0.979 & 0.964 & 0.969 & 0.951 & 0.968
& 0.938 & 0.977 & 0.991 & 0.977 & 0.961 & 0.967 & 0.947 & 0.965 \\
\quad Qwen-3.5-9B (shape-only, 32D)
& 0.993 & 0.997 & \textbf{0.999} & 0.998 & 0.997 & 0.998 & 0.997 & 0.997
& 0.993 & 0.997 & \textbf{0.999} & 0.998 & 0.997 & 0.998 & 0.997 & 0.997 \\
\rowcolor{gray!10}
\quad Qwen-3.5-9B (full, 32D)
& 0.995 & \textbf{0.998} & \textbf{0.999} & \textbf{0.999}
& \textbf{0.998} & \textbf{0.999} & \textbf{0.999} & \textbf{0.998}
& 0.995 & \textbf{0.998} & \textbf{0.999} & \textbf{0.999}
& \textbf{0.998} & \textbf{0.999} & \textbf{0.999} & \textbf{0.998} \\
\bottomrule
\end{tabular}
\end{adjustbox}

\normalsize
\caption{Human-LLM review detection under Original and strict-rewrite human-review conditions. BTE classifiers are trained under three paper-disjoint settings (A-C); all methods are evaluated on the same seven target conference-years. AUROC is macro-averaged over conference-year $\times$ generator cells; higher is better. Values are rounded to three decimal places; \textbf{boldface} marks the best value in each column based on unrounded scores. Column abbreviations: Gem-1.5 = Gemini-1.5, Clau-3.5 = Claude-3.5, Gem-3.1 = Gemini-3.1. More detail is shown in Appendix~\ref{app:review-data}.}
\label{tab:bte-review-three-settings}
\end{table}

Table~\ref{tab:bte-review-three-settings} reveals that:
\textbf{1) BTE profiles provide a transferable and style-robust representation.}
Across Settings A and B, classifiers trained on BTE profiles generalize across papers and conference-years (year-wise results in Appendix~\ref{app:year-ablation} show that this transfer is broadly consistent, although performance is relatively weaker on 2024 reviews, where the distinction between purely human- and LLM-assisted writing may itself be less clear), and remain effective even when the target generator family is entirely held out during training. The signal is also largely preserved under the Rewrite condition, suggesting that the separation is not primarily driven by superficial human–LLM stylistic cues. In particular, under cross-family transfer (Setting B), Qwen-BTE achieves a macro-AUROC of $0.995$, with essentially unchanged performance after rewriting ($0.994$).
This transfer, however, does not extend uniformly across generations of LLMs. When trained only on legacy generators (Setting A), performance drops substantially on GPT-5.5, GPT-5.6, and Opus5, particularly for the Llama scorer. The depth profiles help explain this shift (Figure~\ref{fig:gen-profiles}).
Under both scorers, earlier generators exhibit lower BTE than human reviews across multiple depth regions. This gap progressively narrows for newer generators, particularly in the middle and final layers, and is eventually reversed by Opus-5. Notably, Qwen retains a clear separation in the early–middle layers (AUROC $0.82$–$0.95$), explaining its stronger cross-generation transfer.
\textbf{Importantly, this trajectory shift is accompanied by a substantial increase in human-evaluated review quality.} In pairwise comparisons against matched human reviews (detailed setting in Appendix~\ref{app:human-eval}), GPT-5.5, GPT-5.6, and Opus-5 achieve a pooled preference score of $0.878$, well above human parity ($0.5$) and the $0.531$ of the three earlier generators. Across all seven generators, human preference is strongly correlated with the signed middle-layer BTE shift relative to human reviews ($\rho=0.857$, $p=0.024$). Thus, generators receiving higher human preference tend to exhibit BTE profiles closer to, and for Opus-5 beyond, the human distribution.
These results provide a possible explanation for why detection becomes harder for newer generators: \textbf{improvements in human-evaluated review quality are accompanied by BTE profiles that increasingly approach the human distribution. More broadly, this shift suggests that the question may no longer be only how to distinguish model-generated reviews from human ones.} As strong models produce reviews that are consistently preferred by human evaluators, their reviewing behavior may itself provide a useful reference for potentially improving reviewing.

\begin{figure}[tbp]

    \centering
    \includegraphics[
        width=\textwidth,
        keepaspectratio
    ]{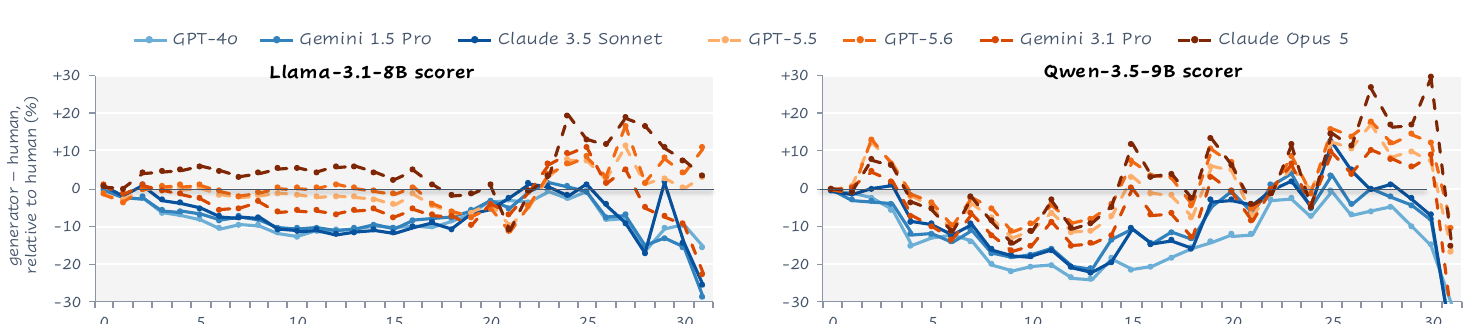}
\caption{Depth profiles of generated reviews relative to human reviews of the same papers, per layer, under the Llama-3.1-8B (left) and Qwen-3.5-9B (right) scorers. }
     \label{fig:gen-profiles}
     
\end{figure}

\textbf{2) BTE provides strong discriminative results.}
With all seven generators represented in training (Setting C), Qwen-BTE reaches $0.998$ macro-AUROC and Llama-BTE reaches $0.968$, compared with $0.901$ for Open-Anchor and below $0.868$ for likelihood-based baselines. Performance remains nearly unchanged after rewriting human reviews. Open-Anchor relies on semantic similarity to reference generations and can therefore degrade when the anchor models differ substantially from the generators being detected, while likelihood-based methods reduce the model’s response to output-level statistics. In contrast, BTE retains both the magnitude and depth-wise structure of predictive revisions, providing richer discriminative information beyond the final predictive output.
The scoring model also matters. Qwen-3.5-9B consistently outperforms Llama-3.1-8B, particularly under the cross-family and full-coverage settings. A familiarity measurement points to the reason: under Qwen, the mean next-token negative log-likelihood of generated reviews is 10.4\% lower than that of human reviews of the same papers, against 2.4\% under Llama, and the mid-depth BTE deficit of generated reviews is correspondingly deeper (detailed in Appendix~\ref{app:scorer-ablation}). Qwen is thus substantially more familiar with model-generated text than with human text, which widens the very gap the detector exploits. A plausible cause is heavier exposure to model-generated text during training, although the training mixtures are not public. \textbf{Interestingly, this provides a complementary view of our difficulty results (Section~\ref{sec:difficulty}): scorer familiarity can reduce comparability across data distributions, but when familiarity itself differs systematically between classes, it can provide a finer-grained signal for discrimination.}

\subsection{BTE for Generator Attribution}
\label{sec:attribution}

The preceding experiments show that the BTE depth profile contains sufficient provenance information to distinguish human- from LLM-generated reviews. We next ask whether BTE can support a finer-grained distinction: identifying which model generated a review. Beyond the depth profile, we additionally retain token-level BTE patterns that are removed by token averaging, allowing us to test whether variation across both tokens and depth provides additional information for generator attribution (detailed setting in Appendix~\ref{app:attribution-transfer}).
As shown in Table~\ref{tab:attribution-features}, the BTE depth profile alone already supports strong eight-way attribution, reaching $86.4\%$ and $91.3\%$ accuracy with Llama-3.1-8B-Instruct and Qwen-3.5-9B, respectively. Retaining token-level variation further improves performance to $93.2\%$ and $95.6\%$. The remaining errors are concentrated between GPT-5.5 and GPT-5.6, which are confused with each other in roughly 10\% of cases (Figure~\ref{fig:generator-clusters}; more results are shown in Figure~\ref{fig:generator_clusters_umap_all}), while the other sources are largely well separated. These results show that generator-specific information exists at multiple granularities of the belief trajectory, with a practical trade-off between preserving finer-grained information and maintaining a compact, efficient representation.

\begin{table}[!htbp]
\vspace{5mm}
\centering
\setlength{\parskip}{0pt}
\setlength{\attributionrowextra}{0pt}

\newcommand{\attributionrowstrut}{%
  \rule[-\dimexpr\dp\strutbox+\attributionrowextra/2\relax]{0pt}%
       {\dimexpr\ht\strutbox+\dp\strutbox+\attributionrowextra\relax}%
}

\newcommand{\attributiontabular}{%
  \small
  \renewcommand{\arraystretch}{1}%
  \setlength{\tabcolsep}{2pt}%
  \begin{tabular*}{0.49\textwidth}{@{\extracolsep{\fill}}lrll@{}}
    \toprule
    \attributionrowstrut
    \textbf{Per-layer features}
      & \textbf{Dim.} & \textbf{Llama} & \textbf{Qwen} \\
    \midrule
    \attributionrowstrut Token mean
      & 32 & 86.4 & 91.3 \\
    \attributionrowstrut\quad + std, P10/ P50/ P90
      & 160 & 91.8\gain{5.4} & 94.4\gain{3.1} \\
    \attributionrowstrut\quad + 4 position-bin means
      & 288 & 92.7\gain{0.9} & 95.0\gain{0.6} \\
    \attributionrowstrut\quad + AC, top-10\% mean
      & 352 & \textbf{93.2}\gain{0.5}
            & \textbf{95.6}\gain{0.6} \\
    \bottomrule
  \end{tabular*}%
}

\sbox{\attributionimagebox}{%
  \includegraphics[width=0.49\textwidth]
    {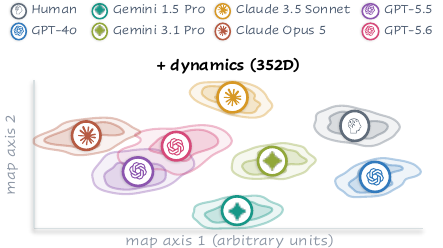}%
}

\setlength{\attributionvisualheight}{%
  \dimexpr\ht\attributionimagebox+\dp\attributionimagebox\relax
}

\sbox{\attributiontablebox}{\attributiontabular}

\setlength{\attributionrowextra}{%
  \dimexpr(
    \attributionvisualheight
    -\ht\attributiontablebox
    -\dp\attributiontablebox
  )/5\relax
}

\sbox{\attributiontablebox}{\attributiontabular}

\begin{minipage}[t]{0.49\textwidth}
  \vspace{0pt}
  \usebox{\attributiontablebox}
\end{minipage}\hfill
\begin{minipage}[t]{0.49\textwidth}
  \vspace{0pt}
  \usebox{\attributionimagebox}
\end{minipage}

\par\nointerlineskip\vspace{6pt}
\noindent
\begin{minipage}[b]{0.49\textwidth}
  \captionsetup{skip=0pt}
  \captionof{table}{Eight-way source attribution
  (Human + seven generators) as token-level statistics
  are cumulatively added to the per-layer BTE
  representation; gray numbers are gains over the
  previous row. Accuracy on generated reviews from
  held-out papers under the full-coverage setting;
  details and further settings in
  Appendix~\ref{app:attribution-transfer}.}
  \label{tab:attribution-features}
\end{minipage}\hfill
\begin{minipage}[b]{0.49\textwidth}
  \captionsetup{skip=0pt}
  \captionof{figure}{Held-out reviews of the eight
  sources in the classifier's discriminant space
  (352D features, Llama scorer), shown with a
  nonlinear (t-SNE) map; contours enclose 90\%
  and 50\% of each source's density, dots are
  centroids.}
  \label{fig:generator-clusters}
\end{minipage}
\end{table}

\section{Ablation Study}
\label{sec:ablation}

Our results suggest that BTE provides a model-grounded measurement whose behavior depends on the scoring model. We therefore investigate how this signal develops and changes as the model learns. We study two complementary stages of training: broad pretraining, which progressively builds the model’s representation of the data distribution, and targeted fine-tuning, which selectively increases exposure to particular regions of that distribution. Together, these analyses allow us to examine both how BTE emerges and how it is reshaped by what the model has learned.
 For the pretraining analysis, we trace OLMo-2-7B~\citep{OLMo2} across training checkpoints. For the targeted-exposure analysis, we focus on mathematical reasoning, which provides controlled difficulty labels and related data distributions with different levels of model exposure. Specifically, we fine-tune Llama-3.1-8B-Instruct on human-verified solutions from PRM800K~\citep{lightman2023lets}. Training details and additional robustness analyses are provided in Appendix~\ref{app:sft-details}.

\textbf{1) BTE structure emerges progressively during pretraining.}
We trace BTE across the pretraining process. As shown in Figure~\ref{fig:sft-shape}, the randomly initialized model exhibits virtually no input-dependent variation: the five MATH difficulty levels collapse onto the same BTE profile, with identical middle-layer means of $0.013$ and a standard deviation below $0.0002$ across the 500 solutions. As pretraining progresses, the profiles become increasingly differentiated while BTE rises overall, particularly in the middle and late layers; middle-layer BTE increases seven-fold from $0.013$ to $0.093$ in OLMo-2-7B~\citep{OLMo2}. In parallel, BTE becomes increasingly aligned with input difficulty: its correlation with MATH difficulty shifts from $\rho=-0.13$ at 1T tokens to $\rho=+0.03$ at the end of OLMo pretraining at 4T tokens, while Llama-3.1, pretrained on 15T tokens, reaches $\rho=+0.36$.
The concurrent development of BTE magnitude and difficulty sensitivity suggests that \textbf{the structure captured by BTE is not simply induced by the architecture itself, but develops as the model learns the data distribution.} As pretraining exposes the model to increasingly plenty of data, its predictive trajectories begin to reflect learned distinctions among inputs, which in turn become observable through BTE.
\begin{figure}[tbp]
\centering
\includegraphics[width=\textwidth]{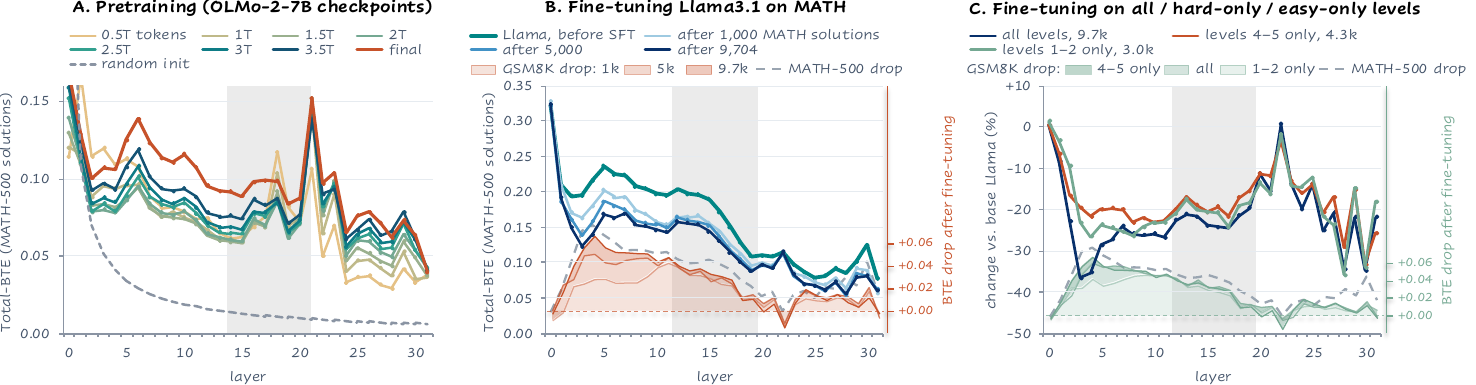}

\caption{Familiarity and the depth profile. \textbf{A}: OLMo-2-7B stage-1 checkpoints from 0.5T to the 4T final model. \textbf{B}: mean profile before and after fine-tuning on 1k, 5k and 9.7k MATH solutions; shading gives the GSM8K drop at each dose and the dashed line the MATH-500 drop (right axis). \textbf{C}: relative change per layer for the three training arms, with the GSM8K drop per arm shaded and the MATH-500 drop dashed as in A.  Grey band: layers 12-19.}
\label{fig:sft-shape}

\end{figure}

\textbf{2)}
Fine-tuning on a specific distribution systematically reduces the BTE it induces. After MATH fine-tuning, middle-layer BTE decreases by $22\%$ on MATH-500, compared with $17\%$ on GSM8K, eventually reversing their cross-benchmark ordering (Figure~\ref{fig:sft-shape}B). The same pattern appears within MATH: fine-tuning selectively on levels4–5 reduces BTE more strongly for level5 than for level~1 ($-19.7\%$ vs.\ $-17.5\%$). These results, aligned with the hypothesis in Section~\ref{sec:difficulty}, suggest that greater targeted exposure is associated with smaller predictive revisions on the exposed distribution.
Together with the pretraining results, this analysis shows that BTE reflects what the scoring model has learned: its structure develops during broad pretraining and shifts with subsequent targeted exposure. \textbf{BTE therefore provides a model-relative view of an input that complements properties defined independently of the scorer.} Consequently, the choice and training history of the scoring model form part of the measurement itself.

\section{Related Work}

A large body of work studies the internal computation of language models through hidden states, attention patterns, neurons, and sparse features~\citep{dai-etal-2022-knowledge, ying2025beyond, gurneefinding, ghandeharioun2024patchscopes, cao2025model, ying2025disentangling}. Logit-lens methods provide a particularly relevant perspective by projecting intermediate residual states through the model’s output head~\citep{nostalgebraistInterpretingGPTLogit2020a,belrose2023eliciting, chuang2024dola}. Subsequent work has used intermediate states to study knowledge recall, reasoning, and model computation. Our work shifts the objective from interpreting individual intermediate states to measuring their successive changes.
 More recent work has explored internal representations and activations as richer signals for estimating uncertainty~\citep{cao2025model,kadavath2022language, azaria2023internal}, difficulty, correctness, or other properties of model inputs and outputs~\citep{wang2025effieval,ying2025beyond}. These approaches demonstrate the value of looking beyond final predictions, but often rely on high-dimensional representations or task-specific supervised readouts. BTE instead operates directly in the model’s predictive distribution space and provides both a training-free scalar summary and a compact depth-wise representation of how predictive beliefs evolve during a forward pass.

\section{Discussion}

The central perspective behind BTE is to measure data from the model’s learned view of the world. Many properties of data are defined independently of the model, whereas BTE provides a complementary, model-grounded perspective by characterizing an input through the predictive revisions it induces. Consequently, BTE should not be expected to reproduce human-defined properties exactly: as our difficulty experiments illustrate, the resulting measurement can vary with both the scoring model and what is being scored. \textbf{From this perspective, scorer dependence is not merely a source of variation, but part of the measurement itself: different models can provide different views of the same data.}
More broadly, model-grounded measurement through internal model dynamics remains an emerging direction, with many open questions about how internal computation should be represented and measured. Nevertheless, the consistent patterns observed across our experiments suggest that this perspective is promising and merits further investigation.

\section{Conclusion}

We introduced Belief-Trajectory Energy, a model-grounded measure that characterizes an input through the predictive revisions it induces across Transformer layers. BTE provides both a simple training-free signal and a structured representation, showing consistent utility for difficulty measurement, human–LLM review detection, and generator attribution. Our theoretical and empirical analyses further show that BTE reflects how an input is modeled from the scorer’s perspective and evolves with the model’s training and familiarity. More broadly, our results suggest a shift in perspective: learned models can serve not only as objects of evaluation, but also as measurement instruments whose internal dynamics provide a complementary view of the data they process. BTE represents one step toward this direction.

\bibliographystyle{plainnat}
\bibliography{main}
\clearpage
\appendix
\section{Supplementary experimental result}
\label{app:theory}

\subsection{Additional Theoretical Analysis}
\label{app:proof}

\begin{proof}
Let $q=p+\varepsilon$, where $\sum_i \varepsilon_i=0$, and define
$M=(p+q)/2=p+\varepsilon/2$. Expanding around $\varepsilon=0$ gives
\begin{align}
\KL(p\Vert M)
&=
-\sum_i p_i
\log\left(1+\frac{\varepsilon_i}{2p_i}\right) \\
&=
\frac{1}{8}\sum_i\frac{\varepsilon_i^2}{p_i}
+O(\|\varepsilon\|^3),
\end{align}
where the first-order term vanishes because
$\sum_i\varepsilon_i=0$. Similarly,
\begin{equation}
\KL(q\Vert M)
=
\frac{1}{8}\sum_i\frac{\varepsilon_i^2}{p_i}
+O(\|\varepsilon\|^3).
\end{equation}
Hence,
\begin{equation}
\JSD(p,q)
=
\frac{1}{8}
\varepsilon^\top
\mathrm{diag}(1/p)
\varepsilon
+O(\|\varepsilon\|^3).
\end{equation}

For $p=\mathrm{softmax}(z)$ and
$q=\mathrm{softmax}(z+\delta)$, the first-order expansion of the
softmax gives
\begin{equation}
\varepsilon
=
F\delta+O(\|\delta\|^2),
\qquad
F=\mathrm{diag}(p)-pp^\top.
\end{equation}
Using
\begin{equation}
F\,\mathrm{diag}(1/p)\,F=F,
\end{equation}
we obtain
\begin{equation}
\JSD(p,q)
=
\frac{1}{8}\delta^\top F\delta
+O(\|\delta\|^3).
\end{equation}
Finally,
\begin{align}
\delta^\top F\delta
&=
\sum_i p_i\delta_i^2
-
\left(\sum_i p_i\delta_i\right)^2 \\
&=
\Var_{i\sim p}[\delta_i],
\end{align}
which proves the result.
\end{proof}

\subsection{Difficulty Experiments: Data and Readout Windows}
\label{app:difficulty}

\paragraph{Data}
Table~\ref{tab:difficulty-data} lists the benchmarks used in Section~\ref{sec:difficulty}. For every item the scorer reads the problem followed by the official solution, and BTE is computed on the solution tokens only. ProofWriter items are grouped by theory (100 theories, five proof depths each) and SpaRTUN items by story, which defines the clusters used for bootstrap resampling; the other have one item per cluster.

\begin{table}[htbp]
\centering
\small
\adjustbox{max width=\linewidth}{%
\begin{tabular}{llrrr}
\toprule
\textbf{Benchmark} & \textbf{Difficulty label} & \textbf{Items} & \textbf{Clusters} & \textbf{Items } \\
\midrule
MATH-500 & level 1-5 & 500 & 500 & 43/90/105/128/134  \\
ProofWriter & proof depth 1-5 & 500 & 100 & 100 each \ \\
StepGame & reasoning hops 1-5 & 500 & 500 & 100 each  \\
SpaRTUN & reasoning hops 1-3 & 610 & 433 & 268/308/34  \\
GSM8K $\to$ MATH-500 $\to$ OlympiadBench & benchmark tier & 150 & 150 & 50 each  \\
\bottomrule
\end{tabular}}
\caption{Difficulty benchmarks. Clusters are the resampling units of the cluster bootstrap (theories for ProofWriter, stories for SpaRTUN, items elsewhere).}
\label{tab:difficulty-data}
\end{table}

\paragraph{Readout Window Comparison}

Table~\ref{tab:windows} compares four fixed scalar readouts of the BTE depth profile: early (layers 0–7), middle (12–19; used in the main), late (24–31), and the full-depth mean. The middle window is the only fixed readout that remains positively associated with difficulty across all five settings, with its 95\% confidence interval excluding zero in every case. Notably, it is not always the strongest readout, the full-depth mean performs best in the cross-benchmark mathematical comparison, but it provides the most consistent direction of association across tasks.

\begin{table}[htbp]
\centering
\adjustbox{max width=\linewidth}{%
\small
\begin{tabular}{lcccc}
\toprule
\textbf{Benchmark} & \textbf{Early (L0-7)} & \textbf{Middle (L12-19)} & \textbf{Late (L24-31)} & \textbf{Full (L0-31)} \\
\midrule
MATH-500 & 0.18 [0.09, 0.26] & \textbf{0.44} [0.37, 0.51] & 0.18 [0.10, 0.26] & 0.33 [0.25, 0.41] \\
ProofWriter & 0.27 [0.21, 0.34] & \textbf{0.33} [0.26, 0.40] & $-$0.45 [$-$0.52, $-$0.38] & $-$0.18 [$-$0.26, $-$0.11] \\
StepGame & 0.59 [0.52, 0.66] & \textbf{0.65} [0.58, 0.70] & $-$0.91 [$-$0.92, $-$0.89] & $-$0.15 [$-$0.25, $-$0.06] \\
SpaRTUN & 0.11 [0.03, 0.19] & \textbf{0.66} [0.62, 0.70] & $-$0.33 [$-$0.39, $-$0.26] & 0.07 [0.00, 0.15] \\
GSM8K$\to$MATH$\to$Olympiad & 0.77 [0.69, 0.82] & 0.44 [0.30, 0.57] & 0.77 [0.69, 0.84] & \textbf{0.79} [0.71, 0.84] \\
\bottomrule
\end{tabular}}
\caption{Item-level Spearman $\rho$ between the readout and the difficulty label (Llama-3.1-8B scorer, reference solutions), with 95\% cluster-bootstrap CIs. Bold: largest $\rho$ per row. The middle window is the only readout that is positive on every benchmark.}
\label{tab:windows}
\end{table}

\paragraph{Writer control.}
Official reference solutions differ in style across the three mathematics benchmarks: GSM8K solutions are short, informal step lists, whereas OlympiadBench solutions are competition-style proofs. To check that the cross-task trend is not an artefact of benchmark-specific writing, we had GPT-5.5 rewrite all 150 solutions under one instruction and re-scored them with Llama (Table~\ref{tab:writer-control}). The ordering GSM8K $<$ MATH-500 $<$ OlympiadBench is unchanged and the middle-layer trend becomes stronger ($\rho = 0.44 \to 0.75$).

\begin{table}[htbp]
\centering
\small
\begin{tabular}{lcc}
\toprule
\textbf{Solution source} & $\boldsymbol{n}$ & $\boldsymbol{\rho}$ \textbf{[95\% CI]} \\
\midrule
Official references & 150 & 0.44 [0.29, 0.57] \\
GPT-5.5 rewrites    & 150 & 0.75 [0.68, 0.81] \\
\bottomrule
\end{tabular}
\caption{\textbf{Writer control for cross-task mathematical difficulty.}
Spearman correlation between middle-layer BTE and the benchmark ordering
GSM8K $\rightarrow$ MATH-500 $\rightarrow$ OlympiadBench, using either
official reference solutions or GPT-5.5 rewrites under a shared instruction.}
\label{tab:writer-control}
\end{table}

\subsection{Review Data and Detector Details}
\label{app:review-data}

\paragraph{Data and splits.}
We use ICLR and NeurIPS reviews from 2021-2024. Each conference-year contributes 200 human-generated review pairs per generator, for 11,200 pairs across seven generators and eight conference-years; pairs for different generators may share the human review. GPT-4o~\citep{gpt4o}, Gemini-1.5-Pro~\citep{gemini1.5}, and Claude-3.5-Sonnet reviews are from~\citet{yu2025your}; we generated the GPT-5.5~\citep{gpt5}, GPT-5.6, Gemini-3.1-Pro~\citep{gemini31pro}, and Claude-Opus-5~\citep{opus5} reviews for the same papers. All splits are by paper, and shared human reviews are counted only once when fitting the classifier. Table~\ref{tab:review-data} gives the training sizes for the three settings. The common test set contains 1,033 review pairs per generator from ICLR 2022-2024 and NeurIPS 2021-2024. In the Rewrite arm, GPT-5.5 is instructed to preserve the human review's content and structure while editing its wording.  For BTE scoring, we convert each paper PDF to Markdown with MinerU, place the parsed paper text in the context prefix, and calculate the score on the corresponding review position.
\begin{table}[htbp]
\centering
\small
\adjustbox{max width=\linewidth}{%
\begin{tabular}{@{}lcc@{}}
\toprule
\textbf{Setting} & \textbf{Training pairs} & \textbf{Test pairs / generator} \\
\midrule
A: three legacy generators & 162 & 1,033 \\
B: leave one family out & 216 / 270 & 1,033 \\
C: all generators and years & 2,947 & 1,033 \\
\bottomrule
\end{tabular}
}
\caption{Human-generated review-pair counts for the paper-disjoint settings. A pairs ICLR 2021 reviews with GPT-4o, Gemini-1.5-Pro, and Claude-3.5-Sonnet. B fits a separate detector for each held-out family: 216 training pairs when GPT is held out and 270 when Gemini or Claude is held out. C uses all seven generators and papers from all eight conference-years.}
\label{tab:review-data}
\end{table}

\paragraph{BTE representation and readout.}
For each review, we use its BTE depth profile $\mathbf{e}=(e_0,\ldots,e_{L-1})$ as the \emph{full} representation. To isolate the relative shape of the profile from its overall magnitude, we additionally consider a \emph{shape-only} representation obtained by normalizing each review across depth:
\begin{equation}
\tilde e_\ell
=
\frac{e_\ell-\mu_{\mathbf e}}
{\sigma_{\mathbf e}+\varepsilon},
\qquad \ell=0,\ldots,L-1,
\label{eq:shape_normalization}
\end{equation}
where $\mu_{\mathbf e}$ and $\sigma_{\mathbf e}$ denote the mean and standard deviation of $\mathbf e$ across layers, and $\varepsilon$ is a small constant for numerical stability. This normalization removes review-level differences in BTE magnitude and scale while preserving the relative pattern of predictive revisions across depth.

For each scoring model, representation (\emph{full} or \emph{shape-only}), and training setting, we fit a class-balanced logistic regression. Features are standardized using statistics computed from the training partition only, and the regularization strength is selected by five-fold cross-validation grouped by paper. LLM-generated reviews are treated as the positive class. Rewritten reviews are used only for evaluation and are never included when fitting the classifier.

\paragraph{Training settings and evaluation.}
In A, the three legacy generators above are the only training generators, so GPT-5.5, GPT-5.6, Gemini-3.1-Pro, and Claude-Opus-5 are unseen. In B, we hold out an entire generator family in turn (GPT, Gemini, or Claude), fit on the remaining families using the same ICLR 2021 papers on the main page (with NeurIPS 2021 as the source in Table~\ref{tab:bte-review-neurips2021-ab}), and score each test generator with the detector that did not see its family. In C, all seven generators and all eight conference-years are represented in training, while test papers remain disjoint. We compute AUROC separately for each of the seven test conference-years and seven generators, then average the 49 values equally. A and B train on ICLR 2021 only; C also increases the amount of training data, so their differences do not isolate a single factor.

\paragraph{Reference detectors.}
Following the recent benchmark study of~\citet{yu2025your}, we select a representative subset of the strongest-performing detectors in their evaluation, balancing baseline coverage with computational cost.
All surrogate-model scores are computed on exactly the BTE inputs: the full review with the full paper as prefix, under the same scorer. Log-likelihood detection~\citep{solaiman2019releasestrategiessocialimpacts} uses the mean next-token log-probability of the review under Llama-3.1-8B-Instruct or Qwen-3.5-9B. SpecDetect~\citep{luo2026specdetect} uses the negative Fourier energy of the centered token log-probabilities of Llama-3.1-8B-Instruct (SpecDetect's original proxy is GPT-J); our additional \emph{normalized} variant divides this energy by the squared number of scored tokens. Binoculars~\citep{hans2024spotting} uses Llama-3.1-8B (base) as observer and Llama-3.1-8B-Instruct as performer, so that the observer/performer pair sees the same context as BTE; the published Falcon-7B pair cannot take the paper prefix within its 2k context. Our transductive Open-Anchor variant~\citep{yu2025your} uses Qwen3-Embedding-8B and three paper-matched anchors from GPT-4o, Gemini-1.5, and Claude-3.5; its score is the maximum cosine similarity after excluding the tested generator's own anchor. Anchor-source reviews are also excluded, leaving 713 test pairs per generator rather than 1,033, so its row is not measured on the identical review set.

\begin{table}[!htbp]
\centering
\setlength{\tabcolsep}{1.15pt}
\renewcommand{\arraystretch}{1.08}
\fontsize{7.2}{8.5}\selectfont
\begin{adjustbox}{max width=\linewidth}
\begin{tabular}{@{}l*{16}{c}@{}}
\toprule
\multirow{2}{*}{\textbf{Method}} & \multicolumn{8}{c}{\textbf{Original human reviews}} & \multicolumn{8}{c}{\textbf{GPT-5.5 rewrite of human reviews}} \\
\cmidrule(lr){2-9}\cmidrule(l){10-17}
& \makecell{GPT\\4o} & \makecell{Gem\\1.5} & \makecell{Clau\\3.5} & \makecell{GPT\\5.5} & \makecell{GPT\\5.6} & \makecell{Gem\\3.1} & \makecell{Opus\\5} & \textbf{Avg.}
& \makecell{GPT\\4o} & \makecell{Gem\\1.5} & \makecell{Clau\\3.5} & \makecell{GPT\\5.5} & \makecell{GPT\\5.6} & \makecell{Gem\\3.1} & \makecell{Opus\\5} & \textbf{Avg.} \\
\midrule
\multicolumn{17}{@{}l}{\textit{Setting A: three legacy training generators; seven held-out conference-years}} \\
\quad Llama-3.1-8B (shape-only, 32D) & 0.9897 & 0.9917 & 0.9899 & 0.3609 & 0.1700 & 0.9453 & 0.0902 & 0.6482 & 0.9876 & 0.9903 & 0.9882 & 0.3033 & 0.1288 & 0.9292 & 0.0622 & 0.6271 \\
\rowcolor{gray!10}
\quad Llama-3.1-8B (full, 32D) & 0.9914 & 0.9835 & 0.9798 & 0.7124 & 0.5877 & 0.9088 & 0.1348 & 0.7569 & 0.9899 & 0.9808 & 0.9759 & 0.6699 & 0.5379 & 0.8903 & 0.1085 & 0.7362 \\
\quad Qwen-3.5-9B (shape-only, 32D) & 0.9951 & 0.9885 & 0.9943 & 0.9369 & 0.8168 & 0.9584 & 0.7675 & 0.9225 & 0.9952 & 0.9887 & 0.9945 & 0.9372 & 0.8199 & 0.9590 & 0.7697 & 0.9235 \\
\rowcolor{gray!10}
\quad Qwen-3.5-9B (full, 32D) & \textbf{0.9960} & 0.9899 & 0.9900 & 0.8782 & 0.7603 & 0.9631 & 0.7782 & 0.9080 & \textbf{0.9957} & 0.9888 & 0.9886 & 0.8475 & 0.7020 & 0.9553 & 0.7210 & 0.8855 \\
\midrule
\multicolumn{17}{@{}l}{\textit{Setting B: leave-one-family-out training; seven held-out conference-years}} \\
\quad Llama-3.1-8B (shape-only, 32D) & 0.9011 & 0.8665 & 0.9758 & 0.8690 & 0.7672 & 0.9397 & 0.7850 & 0.8720 & 0.8955 & 0.8710 & 0.9768 & 0.8596 & 0.7537 & 0.9410 & 0.7874 & 0.8693 \\
\rowcolor{gray!10}
\quad Llama-3.1-8B (full, 32D) & 0.9285 & 0.9327 & 0.9654 & 0.9052 & 0.8829 & 0.9219 & 0.6906 & 0.8896 & 0.9192 & 0.9288 & 0.9621 & 0.8918 & 0.8673 & 0.9161 & 0.6678 & 0.8790 \\
\quad Qwen-3.5-9B (shape-only, 32D) & 0.9820 & \textbf{0.9955} & \textbf{0.9977} & \textbf{0.9974} & \textbf{0.9948} & \textbf{0.9979} & \textbf{0.9934} & \textbf{0.9941} & 0.9810 & \textbf{0.9951} & \textbf{0.9978} & \textbf{0.9975} & \textbf{0.9950} & \textbf{0.9979} & \textbf{0.9930} & \textbf{0.9939} \\
\rowcolor{gray!10}
\quad Qwen-3.5-9B (full, 32D) & 0.9954 & 0.9947 & 0.9962 & 0.9697 & 0.9485 & 0.9942 & 0.9907 & 0.9842 & 0.9951 & 0.9941 & 0.9960 & 0.9640 & 0.9372 & 0.9935 & 0.9895 & 0.9814 \\
\bottomrule
\end{tabular}
\end{adjustbox}
\normalsize
\caption{NeurIPS 2021 source-year transfer for human-LLM review detection. }
\label{tab:bte-review-neurips2021-ab}
\end{table}

\paragraph{Scorer Ablation}
\label{app:scorer-ablation}

Figure~\ref{fig:scorer-era} repeats the review-only detection experiment with five scorers: two Llama-architecture models that differ only in post-training (Llama-3.1-8B-Instruct and its R1-distilled variant), and three generations of the Qwen family (Qwen2.5-7B-Instruction, Qwen3-8B, Qwen3.5-9B).
Table~\ref{tab:familiarity} tests the familiarity account directly. For each scorer we compute the mean next-token negative log-likelihood (NLL) of human and generated reviews of the same papers. Every scorer finds generated reviews easier to predict than human reviews, and the size of this gap grows across the Qwen generations (5.0\%, 6.4\%, 10.4\%) while it is only 2.4\% for Llama.

\begin{table}[htbp]
\centering
\small
\adjustbox{max width=\linewidth}{%
\setlength{\tabcolsep}{4pt}
\begin{tabular}{lcccccc}
\toprule
\textbf{Scorer} & \textbf{NLL human} & \textbf{NLL generated} & \textbf{Gap} & \textbf{Mid-depth BTE deficit} & \textbf{B} & \textbf{C} \\
\midrule
Llama-3.1-8B-Instruct (2024)     & 2.716 & 2.652 & $+2.4\%$  & $-9\%$   & 0.938 & 0.979 \\
DeepSeek-R1-Distill-Llama-8B (2025) & 3.363 & 3.261 & $+3.0\%$  & $-1\%$   & 0.873 & 0.958 \\
Qwen2.5-7B-Instruct (2024)       & 2.799 & 2.660 & $+5.0\%$  & $-10\%$  & 0.970 & 0.985 \\
Qwen3-8B (2025)                  & 2.885 & 2.700 & $+6.4\%$  & $-11\%$  & 0.955 & 0.985 \\
Qwen3.5-9B (2026)                & 2.477 & 2.219 & $+10.4\%$ & $-13\%$  & 0.992 & 0.997 \\
\bottomrule
\end{tabular}}
\caption{Familiarity of each scorer with generated versus human reviews. NLL is the mean final-layer next-token negative log-likelihood (nats) over 100 human-generated review pairs per generator from ICLR 2022 and NeurIPS 2023, scored without the paper; Gap is the relative reduction for generated reviews. Mid-depth BTE deficit is the mean energy of legacy-generator reviews relative to human reviews over the middle third of layers on the full corpus. B and C are the macro-AUROC of the full-32D detector in the review-only setting (Figure~\ref{fig:scorer-era}).}
\label{tab:familiarity}
\end{table}

\begin{figure}[tbp]
    \centering
    \includegraphics[
        width=\textwidth,
        keepaspectratio
    ]{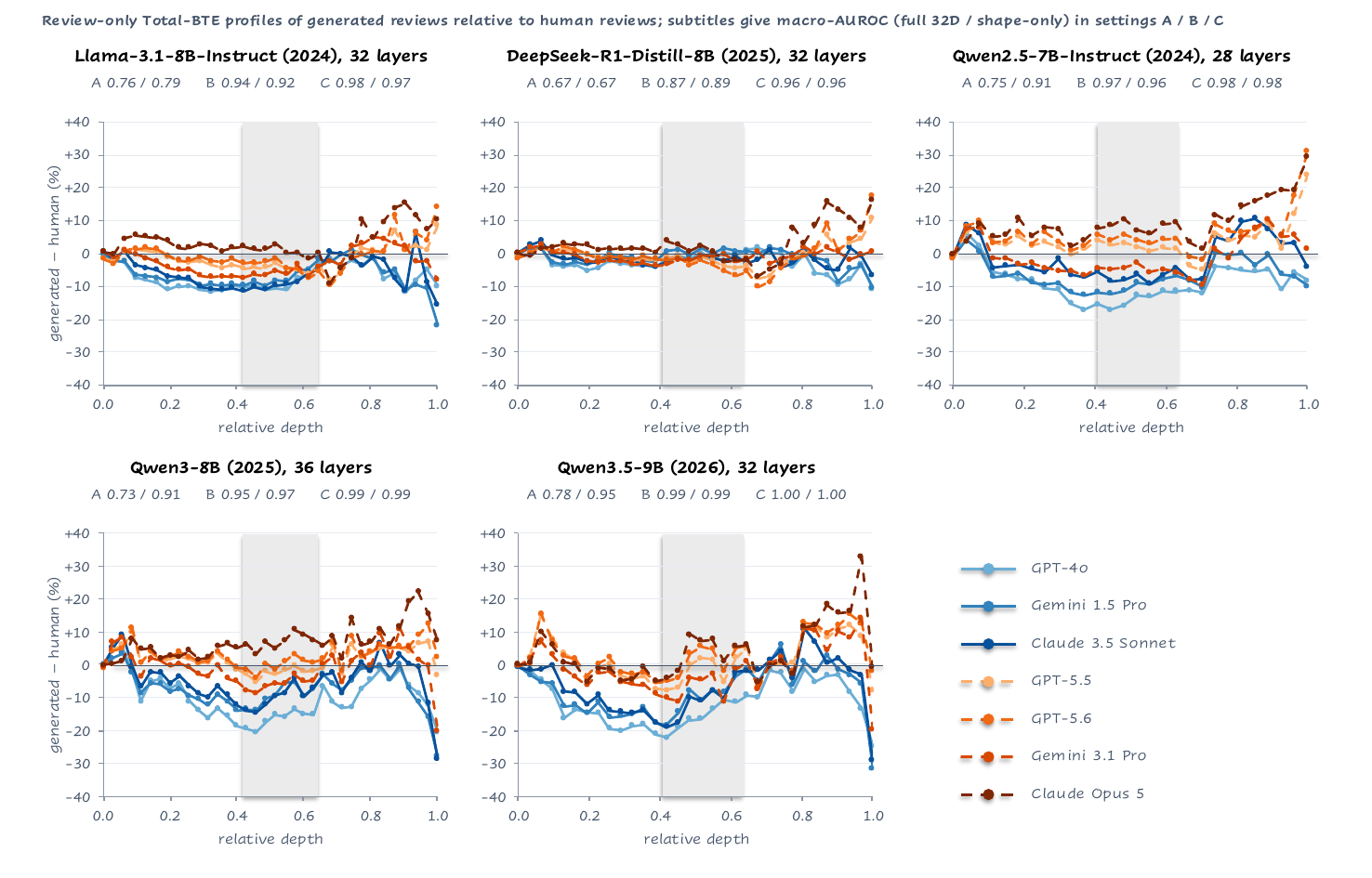}
\caption{Review-only depth profiles of generated reviews relative to human reviews under five scorers, with layer index normalised to $[0,1]$. Subtitles give macro-AUROC (full 32D / shape-only) in settings A / B / C. Solid: legacy generators; dashed: newer generators.}
\label{fig:scorer-era}
\end{figure}

\paragraph{Per-Year Breakdown of Review Detection}
\label{app:year-ablation}

Table~\ref{tab:year-macro} splits the macro-AUROC of Table~\ref{tab:bte-review-three-settings} by test conference-year (Original human reviews, full 32D features). Three points stand out. First, venue does not matter: ICLR 2022-2024 and NeurIPS 2021-2024 average within 0.01 of each other in settings B and C, so training on ICLR 2021 gives no advantage on later ICLR years. Second, the most recent year is the hardest in every setting and for both scorers: NeurIPS 2024 is the minimum of every column, and ICLR 2024 is the second lowest for Qwen. Third, exposure to more generators and years shrinks the spread across years, from 0.18 (A, Llama) to 0.05 (C, Llama) and 0.007 (C, Qwen). Table~\ref{tab:year-gen-A} shows that in setting~A the three seen generators stay above 0.94 in every year, while the failure on the unseen newer generators appears in every year rather than in a particular one, and is most severe in 2024. \textbf{Since the human reviews of 2024 may themselves have been written with LLM assistance, the 2024 cells should be read as an upper bound on the difficulty of that year rather than as a property of the detector.}

\begin{table}[tbp]
\centering
\small
\begin{tabular}{lcccccc}
\toprule
& \multicolumn{2}{c}{\textbf{A}} & \multicolumn{2}{c}{\textbf{B}} & \multicolumn{2}{c}{\textbf{C}} \\
\cmidrule(lr){2-3}\cmidrule(lr){4-5}\cmidrule(lr){6-7}
\textbf{Test conference-year} & Llama & Qwen & Llama & Qwen & Llama & Qwen \\
\midrule
ICLR 2022    & 0.8129 & 0.9314 & 0.8939 & 0.9997 & 0.9601 & 1.0000 \\
ICLR 2023    & 0.7556 & 0.9357 & 0.9441 & 0.9999 & 0.9827 & 0.9998 \\
ICLR 2024    & 0.7723 & 0.8729 & 0.8966 & 0.9865 & 0.9616 & 0.9932 \\
NeurIPS 2021 & 0.7430 & 0.9282 & 0.9444 & 1.0000 & 0.9822 & 1.0000 \\
NeurIPS 2022 & 0.7450 & 0.9231 & 0.9389 & 1.0000 & 0.9872 & 0.9999 \\
NeurIPS 2023 & 0.7247 & 0.8727 & 0.9228 & 0.9936 & 0.9605 & 0.9997 \\
NeurIPS 2024 & 0.6375 & 0.7735 & 0.8250 & 0.9781 & 0.9420 & 0.9934 \\
\midrule
ICLR mean    & 0.7803 & 0.9133 & 0.9115 & 0.9954 & 0.9681 & 0.9976 \\
NeurIPS mean & 0.7126 & 0.8744 & 0.9078 & 0.9929 & 0.9680 & 0.9982 \\
\bottomrule
\end{tabular}
\caption{AUROC by test conference-year, averaged over the seven generators (Original human reviews, full 32D). Settings A-C and scorers as in Table~\ref{tab:bte-review-three-settings}.}
\label{tab:year-macro}
\end{table}

\begin{table}[!htb]
\centering
\small
\setlength{\tabcolsep}{4pt}
\begin{tabular}{lccccccc}
\toprule
\textbf{Test conference-year} & GPT-4o & Gem-1.5 & Clau-3.5 & GPT-5.5 & GPT-5.6 & Gem-3.1 & Opus-5 \\
\midrule
\multicolumn{8}{l}{\textit{Llama-3.1-8B, setting A}} \\
ICLR 2022    & 0.999 & 0.999 & 0.999 & 0.937 & 0.860 & 0.792 & 0.105 \\
ICLR 2023    & 1.000 & 0.999 & 0.998 & 0.689 & 0.498 & 0.958 & 0.147 \\
ICLR 2024    & 0.990 & 0.968 & 0.939 & 0.788 & 0.738 & 0.895 & 0.088 \\
NeurIPS 2021 & 1.000 & 0.998 & 0.999 & 0.649 & 0.493 & 0.943 & 0.120 \\
NeurIPS 2022 & 1.000 & 0.998 & 0.996 & 0.639 & 0.510 & 0.934 & 0.139 \\
NeurIPS 2023 & 0.985 & 0.983 & 0.981 & 0.626 & 0.427 & 0.952 & 0.118 \\
NeurIPS 2024 & 0.970 & 0.947 & 0.958 & 0.416 & 0.272 & 0.855 & 0.044 \\
\midrule
\multicolumn{8}{l}{\textit{Qwen-3.5-9B, setting A}} \\
ICLR 2022    & 1.000 & 1.000 & 1.000 & 0.962 & 0.846 & 0.972 & 0.740 \\
ICLR 2023    & 1.000 & 1.000 & 1.000 & 0.905 & 0.791 & 0.995 & 0.858 \\
ICLR 2024    & 0.993 & 0.984 & 0.961 & 0.864 & 0.707 & 0.950 & 0.652 \\
NeurIPS 2021 & 1.000 & 1.000 & 1.000 & 0.932 & 0.801 & 1.000 & 0.764 \\
NeurIPS 2022 & 1.000 & 1.000 & 1.000 & 0.893 & 0.799 & 0.995 & 0.775 \\
NeurIPS 2023 & 0.988 & 0.985 & 0.986 & 0.820 & 0.659 & 0.946 & 0.725 \\
NeurIPS 2024 & 0.993 & 0.961 & 0.978 & 0.632 & 0.474 & 0.868 & 0.508 \\
\bottomrule
\end{tabular}
\caption{Setting A (three legacy training generators, ICLR 2021 papers) by test conference-year and generator, full 32D, Original human reviews. The first three generators are seen in training.}
\label{tab:year-gen-A}
\end{table}

\subsection{Human Evaluation}
\label{app:human-eval}

\paragraph{Evaluators and materials.}
We recruited six evaluators, all NLP researchers. The evaluation set contains 30 language-focused papers from ICLR and NeurIPS 2021-2024 (15 papers from each venue), including NLP, language-model reasoning, speech, and language-related multimodal work. For each paper, we collected one human review and one review from each of the seven generators, yielding 240 reviews in total. Paper selection used only venue, year, and frozen topic metadata.

\paragraph{Assignment and blinding.}
Each evaluator was assigned ten papers (five from ICLR and five from NeurIPS) and compared ten review pairs per paper. The ten pairs comprised all seven human-model comparisons and three model-model comparisons. Every paper and every pair was evaluated independently by two evaluators. The paired evaluators saw the same reviews in opposite A/B orientations, while paper order and within-paper comparison order were pseudorandomly shuffled. Review provenance, model identity, and BTE scores were hidden. This design produced 300 distinct pairs and 600 completed judgments: 420 judgments on 210 human-model pairs and 180 judgments on 90 model-model pairs. Each generator was therefore compared with the matched human review on all 30 papers, with two judgments per paper ($n=60$). All six evaluators completed their 100 assigned comparisons; no evaluator was excluded.

\begin{tcolorbox}[
  enhanced,
  breakable=false,
  before skip=4pt,
  after skip=4pt,
  colback=black!2,
  colframe=black!38,
  colbacktitle=black!7,
  coltitle=black,
  boxrule=0.55pt,
  arc=1.2mm,
  left=1.8mm,
  right=1.8mm,
  top=1.2mm,
  bottom=1.2mm,
  fonttitle=\bfseries,
  title={Evaluator guidelines (abridged)}
]
\fontsize{8.5}{9.3}\selectfont
\setlength{\parskip}{2pt}
\titlespacing*{\paragraph}{0pt}{2pt}{0.7em}
\paragraph{Use the paper as evidence.} Read the assigned paper before comparing the two reviews. Judge only from the paper and review text; do not consult public reviews, rebuttals, or discussion. If the paper is inaccessible, its version is unclear, or the content cannot be assessed reliably, pause and contact the study coordinator.

\paragraph{Compare quality without inferring provenance.} Reviews A and B are anonymous. Do not guess whether either review was written by a human or an LLM. Select \emph{A better} or \emph{B better} only when one review has a substantive advantage supported by the paper. Select \emph{Materially similar} when the difference is small or the advantages offset one another; do not decide from length, tone, or stylistic polish alone.

\paragraph{Separate grounding from depth.} \emph{Grounding} asks whether a review's claims are accurate, specific, and supported by the paper. \emph{Depth} asks whether it identifies the central issue and explains why an observation matters, including its causes and consequences for the paper's claims, rather than merely restating it. Improvement suggestions may strengthen a criticism but are not required for a weakness to be deep.

\paragraph{Handle questions conditionally.} A substantive question may appear in a dedicated Questions section or within Weaknesses and must request clarification or additional evidence; merely stating a weakness is not a question. Rate the two Questions dimensions only when both reviews contain substantive questions. Otherwise record which side is missing such questions and skip both Questions items.
\end{tcolorbox}

\paragraph{Rubric and scoring.}
For each pair, evaluators selected \emph{A better}, \emph{B better}, or \emph{Materially similar} on nine dimensions: (i) overall scientific usefulness, quality, and depth; grounding and depth for (ii-iii) Summary, (iv-v) Strengths, and (vi-vii) Weaknesses; and grounding and depth/usefulness for (viii-ix) Questions. The Overall judgment is our primary human-evaluation outcome. The remaining dimensions are reported separately and are not summed into a composite score. The two Questions dimensions are defined only for judgments in which both reviews contain substantive questions.
After restoring the hidden source orientation, a model win, a materially similar judgment, and a human win receive scores of $1$, $0.5$, and $0$, respectively. For generator $g$, the tie-adjusted preference score is
\begin{equation}
P_g=\frac{W_g+0.5T_g}{W_g+T_g+H_g},
\label{eq:human-preference}
\end{equation}
where $W_g$, $T_g$, and $H_g$ denote model wins, ties, and human wins. Thus, $P_g=0.5$ denotes parity with the matched human reviews. We obtain 95\% confidence intervals by resampling the 30 papers while retaining both judgments for each sampled paper. Papers, rather than the 60 individual judgments, are the primary sampling units.

\begin{table}[htbp]
\centering
\small
\setlength{\tabcolsep}{4.0pt}
\begin{tabular}{@{}lrrrcc@{}}
\toprule
\textbf{Generator} & $\boldsymbol{W}$ & $\boldsymbol{T}$ & $\boldsymbol{H}$ & $\boldsymbol{P_g}$ & \textbf{95\% CI} \\
\midrule
GPT-4o                & 23 & 6 & 31 & 0.433 & $[0.300,0.575]$ \\
Gemini~1.5~Pro        & 38 & 2 & 20 & 0.650 & $[0.508,0.783]$ \\
Claude~3.5~Sonnet     & 29 & 3 & 28 & 0.508 & $[0.367,0.650]$ \\
GPT-5.5               & 49 & 6 &  5 & 0.867 & $[0.775,0.942]$ \\
GPT-5.6               & 51 & 5 &  4 & 0.892 & $[0.800,0.967]$ \\
Gemini~3.1~Pro        & 28 & 2 & 30 & 0.483 & $[0.333,0.633]$ \\
Claude~Opus~5         & 50 & 5 &  5 & 0.875 & $[0.775,0.958]$ \\
\bottomrule
\end{tabular}
\caption{Human preference results on the primary Overall dimension. $W$, $T$, and $H$ denote model wins, materially similar judgments, and human wins. Each row contains 60 judgments over the same 30 papers, with two evaluators per paper. Scores above $0.5$ favor the generated review. Confidence intervals use a paper-clustered bootstrap.}
\label{tab:human-eval-overall}
\end{table}

\paragraph{Primary results and agreement.}
GPT-5.5, GPT-5.6, and Claude~Opus~5 each substantially exceed parity, with respective preference scores of $0.867$, $0.892$, and $0.875$. Collectively, they were preferred to the matched human review in 150 of 180 judgments (83.3\%), with 16 ties (8.9\%) and 14 human wins (7.8\%). Their pooled preference score is $0.878$ (paper-bootstrap 95\% CI: $[0.797,0.939]$), compared with $0.531$ ($[0.408,0.653]$) for GPT-4o, Gemini~1.5, and Claude~3.5. An exploratory contrast between these observed high-performing and legacy groups gives a difference of $0.347$ (paper-bootstrap 95\% CI: $[0.236,0.458]$; two-way paper- and evaluator-clustered $t_5=5.87$, $p=0.002$). Gemini~3.1 is an exception among the newer generators, with $P_g=0.483$. The pooled score of the three high-performing generators remains between $0.853$ and $0.913$ when any one evaluator is removed.
After mapping both raters' choices back to the same source orientation, exact agreement on Overall was 71.0\% for the primary human-model pairs; ordinal Krippendorff's $\alpha$ was $0.496$. Across all 300 pairs, exact Overall agreement was 68.7\%.

\begin{table}[tbp]
\centering
\small
\setlength{\tabcolsep}{5.0pt}
\renewcommand{\arraystretch}{1.08}
\adjustbox{max width=\linewidth}{%
\begin{tabular}{@{}lcccccccccc@{}}
\toprule
\multirow{2}{*}{\textbf{Generator}}
& \multirow{2}{*}{\textbf{Overall}}
& \multicolumn{2}{c}{\textbf{Summary}}
& \multicolumn{2}{c}{\textbf{Strengths}}
& \multicolumn{2}{c}{\textbf{Weaknesses}}
& \multicolumn{2}{c}{\textbf{Questions}}
& \multirow{2}{*}{$\boldsymbol{n_Q}$} \\
\cmidrule(lr){3-4}\cmidrule(lr){5-6}\cmidrule(lr){7-8}\cmidrule(lr){9-10}
& & \textbf{G} & \textbf{D} & \textbf{G} & \textbf{D} & \textbf{G} & \textbf{D} & \textbf{G} & \textbf{D} & \\
\midrule
GPT-4o            & 0.433 & 0.700 & 0.650 & 0.725 & 0.683 & 0.383 & 0.392 & 0.451 & 0.390 & 41 \\
Gemini~1.5~Pro    & 0.650 & 0.775 & 0.775 & 0.800 & 0.742 & 0.525 & 0.550 & 0.705 & 0.682 & 44 \\
Claude~3.5~Sonnet & 0.508 & 0.617 & 0.583 & 0.742 & 0.733 & 0.458 & 0.425 & 0.513 & 0.538 & 40 \\
GPT-5.5           & 0.867 & 0.675 & 0.717 & 0.808 & 0.792 & 0.775 & 0.792 & 0.849 & 0.826 & 43 \\
GPT-5.6           & 0.892 & 0.742 & 0.717 & 0.833 & 0.833 & 0.792 & 0.792 & 0.837 & 0.804 & 46 \\
Gemini~3.1~Pro    & 0.483 & 0.550 & 0.500 & 0.592 & 0.608 & 0.400 & 0.475 & 0.513 & 0.436 & 39 \\
Claude~Opus~5     & 0.875 & 0.733 & 0.758 & 0.842 & 0.867 & 0.808 & 0.817 & 0.841 & 0.854 & 41 \\
\bottomrule
\end{tabular}%
}
\caption{Tie-adjusted model preference scores by evaluation dimension. G and D denote grounding and depth, respectively; for Questions, D denotes depth. Overall, Summary, Strengths, and Weaknesses use 60 judgments per generator. Questions are evaluated only when the evaluator judged both reviews to contain substantive questions, and $n_Q$ gives the resulting number of judgments.}
\label{tab:human-eval-dimensions}
\end{table}

\paragraph{Association with BTE.}
We next test whether the generator differences observed by human evaluators are reflected in their BTE profiles. Let $e^{(s)}_{g,p,\ell}$ denote the response-token-averaged BTE at layer transition $\ell$ for the review generated by model $g$ for paper $p$, as measured by scorer $s$. Let $e^{(s)}_{\mathrm{human},p,\ell}$ denote the corresponding value for the matched human review of the same paper. For each generator, we compute the signed middle-layer BTE shift
\begin{equation}
\Delta_g=\frac{1}{2}\sum_{s\in\{\mathrm{Llama},\mathrm{Qwen}\}}
\frac{1}{30}\sum_{p=1}^{30}\frac{1}{8}\sum_{\ell=12}^{19}
\left(e^{(s)}_{g,p,\ell}-e^{(s)}_{\mathrm{human},p,\ell}\right).
\label{eq:human-bte-shift}
\end{equation}
Across the seven generators, $\Delta_g$ correlates with the Overall preference score at Spearman's $\rho=0.857$. The association is similar when the two scorers are analyzed separately (Llama: $\rho=0.786$, $p=0.048$; Qwen: $\rho=0.857$, $p=0.024$), and the ranking induced by $\Delta_g$ is unchanged when BTE means are computed over the complete eight-conference corpus.

\subsection{Attribution: Feature Sets and Transfer Settings}
\label{app:attribution-transfer}

\begin{figure}[tbp]
    \centering
    \includegraphics[
        width=\textwidth,
        keepaspectratio
    ]{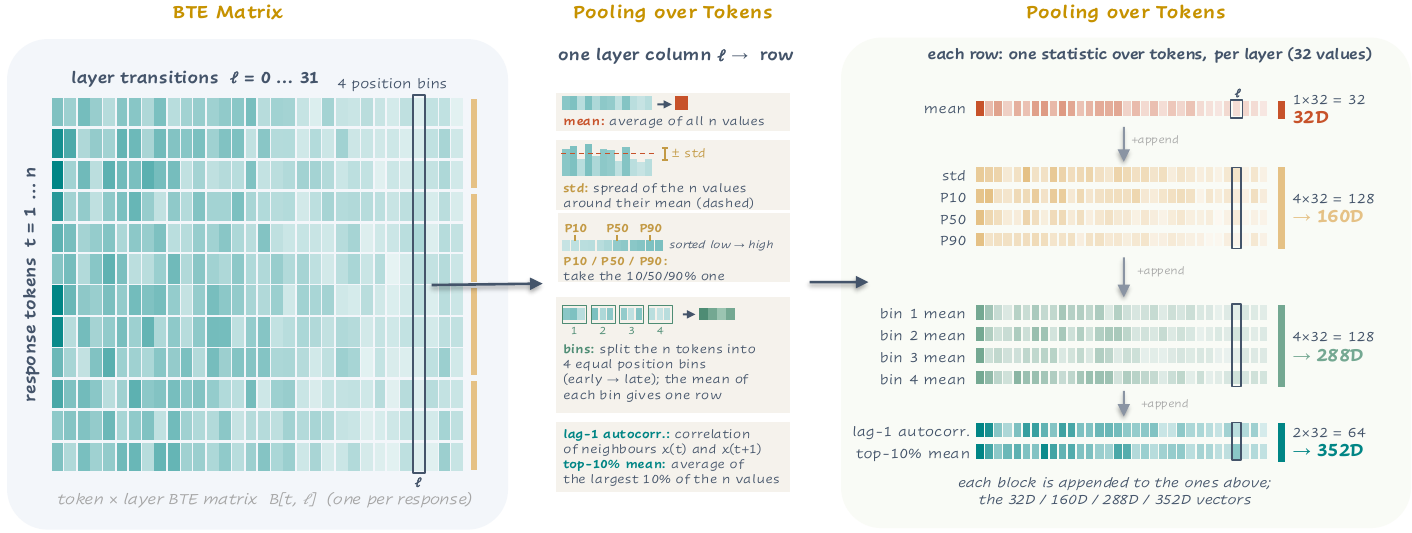}
\caption{Attribution features. Each layer column of the token~$\times$~layer BTE matrix (left)
is summarised over tokens (middle) and the resulting per-layer statistics are concatenated
into 32D-352D feature vectors (right).}
\label{fig:feature-illustration}
\end{figure}

All feature sets are computed from the same matrix of token-level Total BTE (one row per response token, one column per layer). \emph{Token mean} averages each column (32D). Each further row of Table~\ref{tab:attribution-features} adds, per layer, the standard deviation and the 10th/50th/90th percentiles over tokens (160D), the mean within four equal position bins (288D), and the lag-1 autocorrelation plus the mean of the 10\% highest-energy tokens (352D). We also test the position bins alone (128D) (see Figure~\ref{fig:feature-illustration}). The classifier is a class-balanced multinomial logistic regression with $C=1$ on standardised features over eight classes (Human + seven generators); a generated review predicted as Human counts as an error.  \textbf{A} trains on ICLR 2021 and tests on the other seven conference-years (9,800 generated reviews); \textbf{B} trains on ICLR 2021-2024 and tests on NeurIPS 2021-2024 (5,600); \textbf{C}, reported in the main text, trains on the calibration papers of all eight conference-years and tests on the remaining papers (8,253). Table~\ref{tab:attribution-transfer} gives all results. The token distribution contributes most in every setting ($+5$ to $+6$ points for Llama, $+3$ to $+4$ for Qwen).

\begin{table}[htbp]
\centering
\small
\setlength{\tabcolsep}{4pt}
\begin{tabular}{lr ccc ccc}
\toprule
& & \multicolumn{3}{c}{\textbf{Llama-3.1-8B-Instruction}} & \multicolumn{3}{c}{\textbf{Qwen-3.5-9B}} \\
\cmidrule(lr){3-5}\cmidrule(lr){6-8}
\textbf{Per-layer features} & \textbf{Dim.} & A & B & C & A & B & C \\
\midrule
Token mean                          &  32 & 72.1 & 82.9 & 86.4 & 82.9 & 89.5 & 91.3 \\
\quad + std, P10/P50/P90            & 160 & \textbf{77.9} & \textbf{88.5} & 91.8 & \textbf{87.1} & 93.1 & 94.4 \\
\quad + means in 4 position bins    & 288 & 76.4 & 86.2 & 92.7 & 87.0 & 93.2 & 95.0 \\
\quad + AC, top-10\% mean           & 352 & 77.1 & 87.1 & \textbf{93.2} & 86.6 & \textbf{94.0} & \textbf{95.6} \\
Position bins only                  & 128 & 70.5 & 82.1 & 88.6 & 79.6 & 90.1 & 92.6 \\
\bottomrule
\end{tabular}
\caption{Eight-way source accuracy (\%) on generated reviews under settings A-C.}
\label{tab:attribution-transfer}
\end{table}

\begin{table}[htbp]
\centering
\small
\setlength{\tabcolsep}{3pt}
\begin{tabular}{l ccc ccc}
\toprule
& \multicolumn{3}{c}{\textbf{Llama-3.1-8B-Instruct}} & \multicolumn{3}{c}{\textbf{Qwen-3.5-9B}} \\
\cmidrule(lr){2-4}\cmidrule(lr){5-7}
\textbf{Per-layer features} & A & B & C & A & B & C \\
\midrule
Token mean                        & 79.1 / 2.3 & 74.2 / 0.2 & 85.6 / 1.2 & 93.9 / 0.0 & 97.1 / 0.1 & 98.6 / 0.3 \\
\quad + std, P10/P50/P90          & 85.1 / 0.7 & 94.8 / 0.5 & 93.3 / 0.8 & 94.1 / 0.0 & 97.8 / 0.1 & 98.3 / 0.1 \\
\quad + means in 4 position bins  & 86.7 / 0.8 & 94.6 / 0.5 & 93.6 / 0.5 & 94.9 / 0.1 & 98.2 / 0.1 & 98.0 / 0.0 \\
\quad + AC, top-10\% mean         & 85.6 / 0.7 & 95.4 / 0.3 & 94.1 / 0.5 & 94.4 / 0.0 & 98.1 / 0.0 & 98.0 / 0.1 \\
Position bins only                & 84.3 / 2.3 & 89.2 / 0.5 & 90.2 / 1.0 & 94.9 / 0.1 & 97.4 / 0.2 & 98.7 / 0.2 \\
\bottomrule
\end{tabular}
\caption{Human-versus-machine behaviour of the same eight-way classifiers as Table~\ref{tab:attribution-transfer}: human recall (\%) / share of generated reviews assigned to the Human class (\%).}
\label{tab:attribution-human}
\end{table}

\begin{figure}[tbp]
    \centering
    \includegraphics[
        width=\textwidth,
        keepaspectratio
    ]{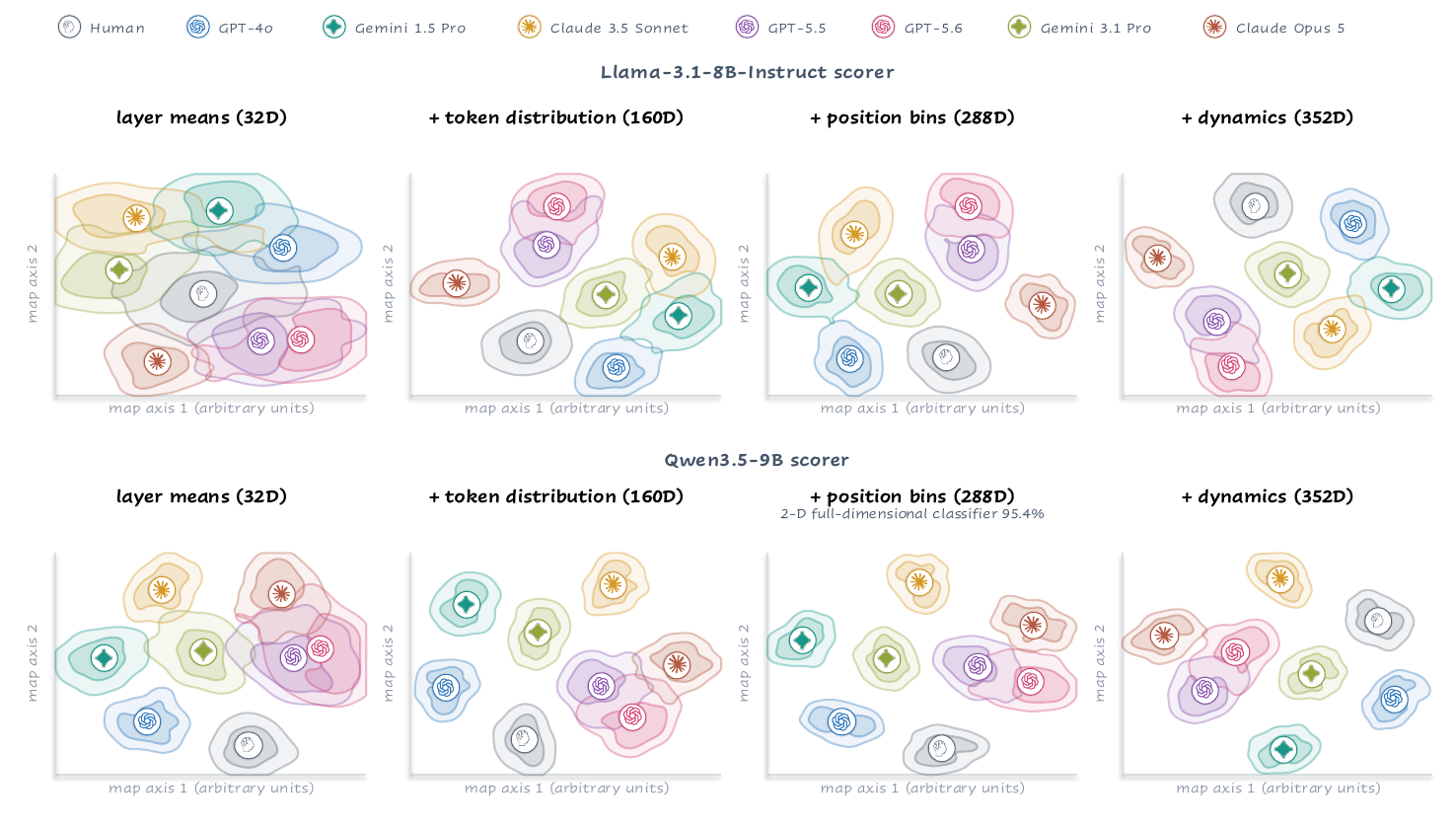}
\caption{Held-out reviews of the eight sources under the Llama scorer, for the four feature sets of Table~\ref{tab:attribution-features}. Each panel is a nonlinear (t-SNE) map of the reviews' scores on the seven LDA discriminant axes fitted on calibration papers; contours enclose 90\% and 50\% of each source's estimated density, dots mark centroids. Map distances are not calibrated: only cluster overlap is meaningful. Richer token statistics separate Human and the three legacy generators from the newer ones; GPT-5.5 and GPT-5.6 remain overlapped and account for most residual errors.}
\label{fig:generator_clusters_umap_all}
\end{figure}

\subsection{Response-Length Sensitivity}
\label{app:length-ablation}

\paragraph{Review detection.}
Although each BTE coordinate is already averaged over response tokens, finite-length effects may remain. On the Original test set, the within-class Spearman correlations between mean full-profile BTE and token count are $0.448/0.142$ for Llama and $0.122/0.040$ for Qwen (human/generated). We therefore test the \emph{full 32D} representation, which retains absolute BTE magnitude. For each layer $\ell$, we fit $e_{i\ell}=\alpha_\ell+\beta_\ell\log(1+n_i)+\epsilon_{i\ell}$ on the training reviews and replace each coordinate by its residual, $e'_{i\ell}=e_{i\ell}-\hat\alpha_\ell-\hat\beta_\ell\log(1+n_i)$. The same train-fitted coefficients transform the test reviews, and the residualizer is refit inside each training fold used to select the logistic-regression regularization. Table~\ref{tab:review-length-ablation} shows that length alone is weak (macro-AUROC $0.512$-$0.538$), while classifiers fitted to the residual profiles differ from their raw-profile AUROC by at most $0.008$ across all settings and both evaluation arms.

\begin{table}[tbp]
\centering
\small
\setlength{\tabcolsep}{4.2pt}
\begin{tabular}{@{}llcccccc@{}}
\toprule
& & \multicolumn{3}{c}{\textbf{Original}} & \multicolumn{3}{c}{\textbf{Rewrite}} \\
\cmidrule(lr){3-5}\cmidrule(l){6-8}
\textbf{Setting} & \textbf{Scorer} & \textbf{Length} & \textbf{Raw 32D} & \textbf{Residual 32D} & \textbf{Length} & \textbf{Raw 32D} & \textbf{Residual 32D} \\
\midrule
A & Llama & 0.5383 & 0.7416 & 0.7489 & 0.5349 & 0.7197 & 0.7233 \\
A & Qwen  & 0.5344 & 0.8911 & 0.8848 & 0.5308 & 0.8640 & 0.8574 \\
B & Llama & 0.5123 & 0.9094 & 0.9167 & 0.5119 & 0.9046 & 0.9127 \\
B & Qwen  & 0.5344 & 0.9940 & 0.9933 & 0.5308 & 0.9934 & 0.9925 \\
C & Llama & 0.5383 & 0.9680 & 0.9695 & 0.5349 & 0.9654 & 0.9680 \\
C & Qwen  & 0.5344 & 0.9980 & 0.9976 & 0.5308 & 0.9979 & 0.9976 \\
\bottomrule
\end{tabular}
\caption{Response-length ablation for review detection. Length is a class-balanced logistic detector using only $\log(1+n)$; Residual 32D removes the train-fitted linear length component from every BTE coordinate. Values are macro-AUROC over the same conference-year $\times$ generator as Table~\ref{tab:bte-review-three-settings}.}
\label{tab:review-length-ablation}
\end{table}

\paragraph{Difficulty.}
We perform the corresponding analysis on the 500 MATH-500 solution responses. Difficulty is correlated with response length ($\rho=0.512$), and BTE itself is length-sensitive, making response length a potential contributor to the observed difficulty association. We therefore compute partial Spearman correlations controlling for response length, with problem length included as an additional sensitivity analysis.
As shown in Table~\ref{tab:math-length-ablation}, controlling for response length substantially attenuates the association between BTE and difficulty, indicating that solution length explains part of the original signal. Nevertheless, a residual association remains for several scorers, most clearly for Llama and OpenMath2, even after controlling for both response and problem length. Thus, BTE should not be interpreted as a length-independent measure of difficulty; rather, it captures a mixture of difficulty-related factors, including but not limited to the amount of solution computation reflected by response length.

\begin{table}[htbp]
\centering
\small
\begin{tabular}{@{}lcccc@{}}
\toprule
\textbf{Scorer} & $\boldsymbol{\rho(E,n)}$ & $\boldsymbol{\rho(E,d)}$ & $\boldsymbol{\rho(E,d\mid n)}$ & $\boldsymbol{\rho(E,d\mid n,m)}$ \\
\midrule
Llama-3.1-8B-Instruct       & 0.576 & 0.444 & 0.212 &  0.151 \\
OpenMath2-Llama3.1-8B      & 0.431 & 0.391 & 0.219 &  0.191 \\
DeepSeek-R1-Distill-Llama-8B & 0.106 & 0.143 & 0.104 &  0.082 \\
Nemotron-Nano-8B-v1        & 0.219 & 0.164 & 0.061 & $-0.016$ \\
\bottomrule
\end{tabular}%
\caption{Length sensitivity on MATH-500. $E$ is mean Total BTE over layers 12-19, $d$ is difficulty level, $n$ is response-token count, and $m$ is problem-token count. The response-length-adjusted association is significant for the first three scorers ($p\leq0.021$), but not Nemotron ($p=0.173$); after controlling both lengths, only Llama and OpenMath2 remain significant ($p<0.001$).}
\label{tab:math-length-ablation}
\end{table}

\begin{table}[htbp]
\centering\small
\setlength{\tabcolsep}{4pt}
\begin{tabular}{llrrrrrc}
\toprule
\textbf{Checkpoint} & \textbf{Step} & \textbf{Tokens} & \textbf{L0-3} & \textbf{L12-19} & \textbf{L28-31} & $\boldsymbol{\rho}$ & \textbf{95\% CI} \\
\midrule
Gaussian init (same architecture) & - & 0 & 0.130 & 0.013 & 0.007 & $-0.01$ & $[-0.10,\,+0.07]$ \\
\midrule
stage1-step150 & 150 & 1B & 0.123 & 0.006 & 0.002 & $-0.14$ & $[-0.23,\,-0.05]$ \\
stage1-step1000 & 1,000 & 5B & 0.139 & 0.012 & 0.007 & $-0.15$ & $[-0.24,\,-0.07]$ \\
stage1-step5000 & 5,000 & 21B & 0.230 & 0.031 & 0.015 & $-0.12$ & $[-0.20,\,-0.03]$ \\
stage1-step12000 & 12,000 & 51B & 0.289 & 0.044 & 0.020 & $-0.14$ & $[-0.23,\,-0.06]$ \\
stage1-step24000 & 24,000 & 101B & 0.344 & 0.058 & 0.021 & $-0.15$ & $[-0.24,\,-0.06]$ \\
stage1-step47000 & 47,000 & 198B & 0.264 & 0.071 & 0.025 & $-0.14$ & $[-0.23,\,-0.05]$ \\
\midrule
stage1-step122000 & 122,000 & 512B & 0.129 & 0.076 & 0.035 & $-0.14$ & $[-0.23,\,-0.06]$ \\
stage1-step238000 & 238,000 & 999B & 0.105 & 0.070 & 0.041 & $-0.13$ & $[-0.22,\,-0.04]$ \\
stage1-step366000 & 366,000 & 1,536B & 0.101 & 0.067 & 0.047 & $-0.08$ & $[-0.17,\,+0.01]$ \\
stage1-step488000 & 488,000 & 2,047B & 0.102 & 0.067 & 0.052 & $-0.11$ & $[-0.19,\,-0.03]$ \\
stage1-step610000 & 610,000 & 2,559B & 0.106 & 0.071 & 0.056 & $-0.08$ & $[-0.16,\,+0.00]$ \\
stage1-step732000 & 732,000 & 3,071B & 0.111 & 0.073 & 0.058 & $-0.07$ & $[-0.16,\,+0.02]$ \\
stage1-step855000 & 855,000 & 3,587B & 0.119 & 0.079 & 0.061 & $-0.04$ & $[-0.14,\,+0.04]$ \\
stage1-step928646 & 928,646 & 3,896B & 0.126 & 0.083 & 0.064 & $-0.01$ & $[-0.11,\,+0.07]$ \\
main (annealed) & - & 3,946B & 0.126 & 0.093 & 0.059 & $+0.03$ & $[-0.06,\,+0.12]$ \\
\midrule
Llama-3.1-8B (base) & - & $\sim$15T & 0.241 & 0.167 & 0.105 & $+0.36$ & $[+0.28,\,+0.43]$ \\
Llama-3.1-8B-Instruct & - & $\sim$15T & 0.230 & 0.168 & 0.097 & $+0.44$ & $[+0.37,\,+0.51]$ \\
\bottomrule
\end{tabular}
\caption{OLMo-2-7B checkpoints (HuggingFace branches of \texttt{allenai/OLMo-2-1124-7B}) with mean Total-BTE on MATH-500 solutions in three layer bands and the Spearman correlation $\rho$ between middle-layer BTE and MATH level (paper-level bootstrap CI). The middle block is the set shown in Fig.~\ref{fig:sft-shape}C; Llama-3.1-8B is listed for reference.}
\label{tab:olmo-checkpoints}
\end{table}

\FloatBarrier
\subsection{Fine-tuning details}
\label{app:sft-details}

\paragraph{Data.}
Solutions come from PRM800K~\citep{lightman2023lets}: human-verified solution trajectories to MATH problems, levels 1-5, at most 2,048 tokens including the prompt, diagram questions removed. Every problem used in our evaluations (MATH-500, the development set and the GSM8K probe) is excluded. Several verified trajectories of the same problem may appear. The full pool has 9,704 solutions (906 / 2,067 / 2,410 / 2,314 / 2,007 for levels 1-5). The two restricted arms are drawn from the same pool: 4,320 of its 4,321 level-4-5 solutions (2,314 / 2,006 for levels 4 / 5) and 2,968 of its 2,973 level-1-2 solutions (903 / 2,065 for levels 1 / 2); the arm builder's phase-2-first fill dropped one and five phase-1 trajectories as surplus.

\paragraph{Training.}
We fine-tune all parameters of Llama-3.1-8B-Instruct with the loss on solution tokens only, for one epoch in random order: constant learning rate $10^{-5}$ without warm-up, AdamW with weight decay 0.01, effective batch size 8 (two GPUs, gradient accumulation 4), context 2,048 tokens, fixed seed. The dose experiment saves checkpoints after 1,000, 5,000 and 9,704 solutions; the levels 4-5 arm is also saved after 2,968 solutions to match the size of the levels 1-2 arm. Runs take 57, 25 and 17 minutes on two NVIDIA GPUs with 90\,GB of memory each. The random-initialisation control uses the OLMo-2-7B architecture with Gaussian initialisation (standard deviation 0.02), fine-tuned with the same recipe at learning rate $10^{-4}$ on 2,000 solutions.

\paragraph{Pretraining checkpoints.}
Fig.~\ref{fig:sft-shape}C uses the released stage-1 checkpoints of OLMo-2-7B~\citep{OLMo2}, the final annealed model, and a Gaussian-initialised model of the same architecture; Table~\ref{tab:olmo-checkpoints} lists them together with the earlier checkpoints (1B-198B tokens) that locate the rise of the middle-layer BTE. These base models have no chat template, so the MATH-500 prompt is given as plain text followed by ``Solution:''; all other scoring settings are unchanged.

\end{document}